\documentclass{article} 
\usepackage{iclr2023_conference}


\usepackage{amsmath,amsfonts,bm}









\def\eqref#1{equation~\ref{#1}}









\def\floor#1{\lfloor #1 \rfloor}
\def\1{\bm{1}}










\DeclareMathAlphabet{\mathsfit}{\encodingdefault}{\sfdefault}{m}{sl}
\SetMathAlphabet{\mathsfit}{bold}{\encodingdefault}{\sfdefault}{bx}{n}













\newcommand{\change}[1]{{#1}}

\usepackage{hyperref,graphicx}
\usepackage{url}

\title{Achieve the Minimum Width of Neural Networks for Universal Approximation}


\author{Yongqiang Cai\thanks{School of Mathematical Sciences, Laboratory of Mathematics and Complex Systems, MOE, Beijing Normal University, 100875 Beijing, China}
\\
Beijing Normal University\\
\texttt{caiyq.math@bnu.edu.cn} \\
}

%

\usepackage{amsthm}

\newtheorem{theorem}{Theorem}
\newtheorem{lemma}[theorem]{Lemma}
\newtheorem{corollary}[theorem]{Corollary}

\newtheorem{definition}[theorem]{Definition}

\iclrfinalcopy 
\begin{document}

\maketitle



\begin{abstract}
    The universal approximation property (UAP) of neural networks is fundamental for deep learning, and it is well known that wide neural networks are universal approximators of continuous functions within both the $L^p$ norm and the continuous/uniform norm. However, the exact minimum width, $w_{\min}$, for the UAP has not been studied thoroughly. Recently, using a decoder-memorizer-encoder scheme, \citet{Park2021Minimum} found that $w_{\min} = \max(d_x+1,d_y)$ for both the $L^p$-UAP of ReLU networks and the $C$-UAP of ReLU+STEP networks, where $d_x,d_y$ are the input and output dimensions, respectively. In this paper, we consider neural networks with an arbitrary set of activation functions. We prove that both $C$-UAP and $L^p$-UAP for functions on compact domains share a universal lower bound of the minimal width; that is, $w^*_{\min} = \max(d_x,d_y)$. In particular, the critical width, $w^*_{\min}$, for $L^p$-UAP can be achieved by leaky-ReLU networks, provided that the input or output dimension is larger than one. Our construction is based on the approximation power of neural ordinary differential equations and the ability to approximate flow maps by neural networks. The nonmonotone or discontinuous activation functions case and the one-dimensional case are also discussed.
\end{abstract}

\section{Introduction}

The study of the universal approximation property (UAP) of neural networks is fundamental for deep learning and has a long history. Early studies, such as \cite{Cybenkot1989Approximation, Hornik1989Multilayer, Leshno1993Multilayer}, proved that wide neural networks (even shallow ones) are universal approximators for continuous functions within both the $L^p$ norm ($1\le p < \infty$) and the continuous/uniform norm. Further research, such as \cite{Telgarsky2016Benefits}, indicated that increasing the depth can improve the expression power of neural networks. If the budget number of the neuron is fixed, the deeper neural networks have better expression power \cite{Yarotsky2019phase, Shen2022Optimal}. However, this pattern does not hold if the width is below a critical threshold $w_{\min}$. \cite{Lu2017Expressive} first showed that the ReLU networks have the UAP for $L^1$ functions from $\mathbb{R}^{d_x}$ to $\mathbb{R}$ if the width is larger than $d_x+4$, and the UAP disappears if the width is less than $d_x$. Further research, \cite{Hanin2018Approximating, Kidger2020Universal, Park2021Minimum}, improved the minimum width bound for ReLU networks. Particularly, \citet{Park2021Minimum} revealed that the minimum width is $w_{\min} = \max(d_x+1,d_y)$ for the $L^p(\mathbb{R}^{d_x},\mathbb{R}^{d_y})$ UAP of ReLU networks and for the $C(\mathcal{K},\mathbb{R}^{d_y})$ UAP of ReLU+STEP networks, where $\mathcal{K}$ is a compact domain in $\mathbb{R}^{d_x}$. 

For general activation functions, the exact minimum width $w_{\min}$ for UAP is less studied. \cite{Johnson2019Deep} consider uniformly continuous activation functions that can be approximated by a sequence of one-to-one functions and give a lower bound $w_{\min} \ge d_x+1$ for $C$-UAP \change{(means UAP for $C(\mathcal{K},\mathbb{R}^{d_y})$)}. \cite{Kidger2020Universal} consider continuous nonpolynomial activation functions and give an upper bound $w_{\min} \le d_x+d_y+1$ for $C$-UAP. \cite{Park2021Minimum} improved the bound for $L^p$-UAP \change{(means UAP for $L^p(\mathcal{K},\mathbb{R}^{d_y})$)} to $w_{\min} \le \max(d_x+2,d_y+1)$. A summary of known upper/lower bounds on minimum width for the UAP can be found in \cite{Park2021Minimum}.

In this paper, we consider neural networks having the UAP with arbitrary activation functions. We give a universal lower bound, $w_{\min} \ge w^*_{\min} = \max(d_x,d_y)$, to approximate functions from a compact domain $\mathcal{K} \subset \mathbb{R}^{d_x}$ to $\mathbb{R}^{d_y}$ in the $L^p$ norm or continuous norm. Furthermore, we show that the critical width $w^*_{\min}$ can be achieved by many neural networks, as listed in Table~\ref{tab:main}. Surprisingly, the leaky-ReLU networks achieve the critical width for the $L^p$-UAP provided that the input or output dimension is larger than one. This result relies on a novel construction scheme proposed in this paper based on the approximation power of neural ordinary differential equations (ODEs) and the ability to approximate flow maps by neural networks.

\begin{table}[htp!]
    \caption{Summary of the known minimum width of feed-forward neural networks that have the universal approximation property.}
    \begin{center}
    \begin{tabular}{llll}
    \multicolumn{1}{c}{\bf Functions} &\multicolumn{1}{c}{\bf Activation}  &\multicolumn{1}{c}{\bf Minimum width} &\multicolumn{1}{c}{\bf References}\\
    \hline
    $C(\mathcal{K},\mathbb{R})$ &ReLU & $w_{\min} = d_x+1$  & \cite{Hanin2018Approximating} \\
    $L^p(\mathbb{R}^{d_x},\mathbb{R}^{d_y})$ &ReLU & $w_{\min} = \max(d_x+1,d_y)$  & \cite{Park2021Minimum} \\
    $C([0,1],\mathbb{R}^{2})$ &ReLU & $w_{\min} = 3=\max(d_x,d_y)+1$  & \cite{Park2021Minimum} \\
    $C(\mathcal{K},\mathbb{R}^{d_y})$ &ReLU+STEP & $w_{\min} = \max(d_x+1,d_y)$  & \cite{Park2021Minimum} \\
    $L^p(\mathcal{K},\mathbb{R}^{d_y})$ &Conti. nonpoly$^\ddagger$ & $w_{\min} \le \max(d_x+2,d_y+1)$  & \cite{Park2021Minimum} \\
    \hline 
    $L^p(\mathcal{K},\mathbb{R}^{d_y})$ &Arbitrary & $w_{\min} \ge \max(d_x,d_y)=:w_{\min}^{*}$  & {\bf Ours} (Lemma \ref{th:universal_w_min}) \\
    & Leaky-ReLU        & $w_{\min} = \max(d_x,d_y,2)$  & {\bf Ours} (Theorem \ref{th:main_LpUAP_leaky_ReLU})\\
    & Leaky-ReLU+ABS        & $w_{\min} = \max(d_x,d_y)$  & {\bf Ours} (Theorem \ref{th:main_LpUAP_leaky_ReLU+ABS})\\
    \hline
    $C(\mathcal{K},\mathbb{R}^{d_y})$ &Arbitrary & $w_{\min} \ge \max(d_x,d_y)=:w_{\min}^{*}$  & {\bf Ours} (Lemma \ref{th:universal_w_min}) \\
    & ReLU+FLOOR      & $w_{\min} = \max(d_x,d_y,2)$  & {\bf Ours} (Lemma \ref{th:C-UAP_ReLU+FLOOR})\\
    & UOE$^\dagger$+FLOOR     & $w_{\min} = \max(d_x,d_y)$  & {\bf Ours} (Corollary \ref{th:C-UAP_ReLU+FLOOR+UOE})\\
    $C([0,1],\mathbb{R}^{d_y})$ & UOE$^\dagger$    & $w_{\min} = d_y$  & {\bf Ours} (Theorem \ref{th:C-UAP_UOE})\\
    \hline
    \multicolumn{4}{l}{\change{\small $\ddagger$ Continuous nonpolynomial $\rho$ that is continuously differentiable at some $z$ with $\rho'(z) \neq 0$.}
    }.\\
    \multicolumn{4}{l}{\small $\dagger$ UOE means the function having \emph{universal ordering of extrema}, see Definition \ref{def:UOE} }.
    \end{tabular}
    \end{center}
    \label{tab:main}
\end{table}

\subsection{Contributions}

\begin{itemize}
    \item[1)] Obtained the universal lower bound of width $w^*_{\min}$ for feed-forward neural networks (FNNs) that have universal approximation properties.
    \item[2)] Achieved the critical width $w^*_{\min}$ by leaky-ReLU+ABS networks and UOE+FLOOR networks. \change{(UOE is a continuous function which has \emph{universal ordering of extrema}. It is introduced to handle $C$-UAP for one-dimensional functions. See Definition \ref{def:UOE}.)}
    \item[3)] Proposed a novel construction scheme from a differential geometry perspective that could deepen our understanding of UAP through topology theory.
\end{itemize}

\subsection{Related work}

To obtain the exact minimum width, one must verify the lower and upper bounds. Generally, the upper bounds are obtained by construction, while the lower bounds are obtained by counterexamples.

\textbf{Lower bounds.}
For ReLU networks, \cite{Lu2017Expressive} utilized the disadvantage brought by the insufficient size of the dimensions and proved a lower bound $w_{\min} \ge d_x$ for $L^1$-UAP;
\cite{Hanin2018Approximating} considered the compactness of the level set and proved a lower bound $w_{\min} \ge d_x+1$ for $C$-UAP. For monotone activation functions or its variants, \cite{Johnson2019Deep} noticed that functions represented by networks with width $d_x$ have unbounded level sets, and \cite{Beise2020Expressiveness} noticed that such functions on a compact domain $\mathcal{K}$ take their maximum value on the boundary $\partial \mathcal{K}$. These properties allow one to construct counterexamples and give a lower bound $w_{\min} \ge d_x+1$ for $C$-UAP. 
For general activation functions, \cite{Park2021Minimum} used the volume of simplex in the output space and gave a lower bound $w_{\min} \ge d_y$ for either $L^p$-UAP or $C$-UAP. 
Our universal lower bound, $w_{\min} \ge \max(d_x,d_y)$, is based on the insufficient size of the dimensions for both the input and output space, which combines the ideas from these references above.

\textbf{Upper bounds.}
For ReLU networks, \cite{Lu2017Expressive} explicitly constructed a width-$(d_x+4)$ network by concatenating a series of blocks so that the whole network can be approximated by scale functions in $L^1(\mathbb{R}^{d_x},\mathbb{R})$ to any given accuracy. \cite{Hanin2018Approximating,Hanin2019Universal} constructed a width-$(d_x+d_y)$ network using the max-min string approach to achieve $C$-UAP for functions on compact domains; \cite{Park2021Minimum} proposed an encoder-memorizer-decoder scheme that achieves the optimal bounds $w_{\min}=\max(d_x+1,d_y)$ of the UAP for $L^p(\mathbb{R}^{d_x}, \mathbb{R}^{d_y})$. For general activation functions, \cite{Kidger2020Universal} proposed a register model construction that gives an upper bound $w_{\min} \le d_x+d_y+1$ for $C$-UAP. Based on this result, \cite{Park2021Minimum} improved the upper bound to $w_{\min}\le \max(d_x+2,d_y+1)$ for $L^p$-UAP. In this paper, we adopt the encoder-memorizer-decoder scheme to calculate the universal critical width for $C$-UAP by ReLU+FLOOR activation functions. However, the floor function is discontinuous. For $L^p$-UAP, we reach the critical width by leaky-ReLU, which is a continuous network using a novel scheme based on the approximation power of neural ODEs.

\textbf{ResNet and neural ODEs.} Although our original aim is the UAP for feed-forward neural networks, our construction is related to the neural ODEs and residual networks (ResNet, \cite{He2016Deep}), which include skipping connections. Many studies, such as \cite{E2017Proposal, Lu2018Finite, Chen2018Neural}, have emphasized that ResNet can be regarded as the Euler discretization of neural ODEs. The approximation power of ResNet and neural ODEs have also been examined by researchers. To list a few, \cite{Li2022Deep} gave a sufficient condition that covers most networks in practice so that the neural ODE/dynamic systems (without extra dimensions) process $L^p$-UAP for continuous functions, provided that the spatial dimension is larger than one;
\cite{Ruiz-Balet2021Neural} obtained similar results focused on the case of one-hidden layer fields. \cite{Tabuada2020Universal} obtained the $C$-UAP for monotone functions, and for continuous functions it was obtained by adding one extra spatial dimension. Recently, \cite{Duan2022Vanilla} noticed that the FNN could also be a discretization of neural ODEs, which motivates us to construct networks achieving the critical width by inheriting the approximation power of neural ODEs. For the excluded dimension one, we design an approximation scheme with leaky-ReLU+ABS and UOE activation functions.

\subsection{Organization}

We formally state the main results and necessary notations in Section \ref{sec:main_results}. The proof ideas are given in Section \ref{sec:case_of_d=1} \ref{sec:case_of_N=d}, and \ref{sec:minimal_width}. In Section \ref{sec:case_of_d=1}, we consider the case where $N=d_x=d_y=1$, which is basic for the high-dimensional cases. The construction is based on the properties of monotone functions. In Section \ref{sec:case_of_N=d}, we prove the case where $N=d_x=d_y \ge 2$. The construction is based on the approximation power of neural ODEs. In Section \ref{sec:minimal_width}, we consider the case where $d_x \neq d_y$ and discuss the case of more general activation functions. Finally, we conclude the paper in Section \ref{sec:conclusion}. All formal proofs of the results are presented in the Appendix.

\section{Main results}
\label{sec:main_results}

In this paper, we consider the standard feed-forward neural network with $N$ neurons at each hidden layer. We say that a $\sigma$ network with depth $L$ is a function with inputs $x\in \mathbb{R}^{d_x}$ and outputs $y\in\mathbb{R}^{d_y}$, which has the following form:
\begin{align} \label{eq:FNN}
    y \equiv f_{L}(x) = W_{L+1} \sigma(W_L( \cdots \sigma( W_1 x + b_1) + \cdots)+ b_L) + b_{L+1},
\end{align}
where $b_i$ are bias vectors, $W_i$ are weight matrices, and $\sigma(\cdot)$ is the activation function. For the case of multiple activation functions, for instance, $\sigma_1$ and $\sigma_2$, we call $f_L$ a $\sigma_1$+$\sigma_2$ network. In this situation, the activation function of each neuron is either $\sigma_1$ or $\sigma_2$. In this paper, we consider arbitrary activation functions, while the following activation functions are emphasized: ReLU ($\max(x,0)$), leaky-ReLU ($\max(x,\alpha x), \alpha \in (0,1)$ is a fixed positive parameter), ABS ($|x|$), SIN ($\sin(x)$), STEP ($1_{x>0}$), FLOOR ($\floor{x}$) and UOE (\emph{universal ordering of extrema}, which will be defined later).

\begin{lemma}\label{th:universal_w_min}
    For any compact domain $\mathcal{K} \subset \mathbb{R}^{d_x}$ and any finite set of activation functions $\{\sigma_i\}$, the $\{\sigma_i\}$ networks with width $w < w^*_{\min} \equiv \max(d_x, d_y)$ do not have the UAP for both $L^p(\mathcal{K},\mathbb{R}^{d_y})$ and $C(\mathcal{K},\mathbb{R}^{d_y})$.
\end{lemma}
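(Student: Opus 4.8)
The plan is to prove the two cases $d_y > w$ and $d_x > w$ separately, since each exploits a distinct ``dimension deficiency'' obstruction.

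For the case $w < d_y$, I would argue exactly as in \citet{Park2021Minimum}: a network of width $w$ factors its computation through $\mathbb{R}^w$ somewhere (indeed, the image of the last hidden layer lies in a $w$-dimensional space, and the output is an affine image of that, hence the output set is contained in a $w$-dimensional affine subspace $V \subset \mathbb{R}^{d_y}$, or more precisely is the image of a continuous/measurable map from $\mathbb{R}^w$). One then exhibits a target $f \in C(\mathcal{K},\mathbb{R}^{d_y})$ whose graph ``fills out'' a nondegenerate $d_y$-simplex's worth of volume in the output — concretely, pick $d_y+1$ points in $\mathcal{K}$ mapped by $f$ to the vertices of a nondegenerate simplex in $\mathbb{R}^{d_y}$. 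Any width-$w$ network, having output confined to a lower-dimensional affine set, cannot simultaneously approximate all $d_y+1$ vertex values to within $\epsilon$ small compared to the simplex's inradius; and for $L^p$ one upgrades this pointwise obstruction to an $L^p$ one by noting $f$ can be taken continuous and the approximant's output-set has $d_y$-dimensional Lebesgue measure zero, so a fixed-volume chunk of mass is always missed. This is essentially the volume-of-simplex lower bound $w_{\min}\ge d_y$ already in the literature, so I would cite it.

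For the case $w < d_x$, the obstruction is in the input space: a width-$w$ network $f_L:\mathbb{R}^{d_x}\to\mathbb{R}^{d_y}$ factors as $f_L = g \circ h$ where $h:\mathbb{R}^{d_x}\to\mathbb{R}^{w}$ is the map to the first hidden (pre- or post-activation) layer and $g:\mathbb{R}^w\to\mathbb{R}^{d_y}$ is the rest. Since $w < d_x$, the affine map $x \mapsto W_1 x + b_1$ has nontrivial kernel, so $h$ is constant along a nonzero direction $v$; hence $f_L(x) = f_L(x + tv)$ for all $t$. Thus every width-$w$ network is invariant under translation by some nonzero $v$ — its level sets are unions of lines, in particular unbounded, and on any compact $\mathcal{K}$ the function's values are ``controlled'' by a lower-dimensional slice. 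I would then pick a target $f$ that is genuinely non-invariant in every direction on $\mathcal{K}$ — e.g. $f(x) = \|x - x_0\|^2$ for suitable $x_0$ (composed with a fixed coordinate if $d_y>1$), which has a strict interior minimum and hence bounded sublevel sets with nonempty interior. A quantitative version: for any $v\neq 0$, the oscillation of $f$ along the segment $\{x_0 + tv : |t|\le r\}\cap\mathcal{K}$ is bounded below by a constant $c(r)>0$ uniformly in the unit vector $v$; but any width-$w$ approximant is constant along its invariance direction, so on that segment it deviates from $f$ by at least $c(r)/2$ on a set of positive length, giving a uniform lower bound on $\|f - f_L\|_{C(\mathcal{K})}$ and (after a Fubini argument along the line) on $\|f - f_L\|_{L^p(\mathcal{K})}$ as well. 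Since $\max(d_x,d_y)$ exceeds $w$ forces at least one of the two cases, combining them finishes the lemma.

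The main obstacle I anticipate is the $L^p$ version of the $w<d_x$ case: the translation-invariance direction $v$ depends on the network, so one cannot fix a single bad line in advance; one must argue uniformly over all $v$ on the unit sphere and then integrate the per-line error bound over a transverse $(d_x-1)$-dimensional family of lines to get a genuine $L^p(\mathcal{K})$ lower bound. Getting the constant to be independent of the approximant (so that it does not degrade to zero as the approximation improves) is the delicate point; choosing $f$ with uniformly positive oscillation along \emph{every} short segment — which $\|x-x_0\|^2$ supplies — is what makes this uniformity work, and restricting attention to a fixed closed ball $B\subset\operatorname{int}\mathcal{K}$ lets me avoid boundary complications. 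The $w<d_y$ case should be routine given the existing simplex-volume argument.
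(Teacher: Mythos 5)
Your proposal is correct and follows essentially the same route as the paper: for $w<d_x$ you exploit the nontrivial kernel of the first affine layer to get translation invariance along some direction $v$ and defeat it with $\|x-x_0\|^2$ (the paper uses $\|x\|^2$ and integrates the oscillation over two translated balls, which is exactly your Fubini/oscillation bound), and for $w<d_y$ you confine the output to an affine subspace of dimension at most $w$ and defeat it with a target hitting $d_y+1$ affinely independent values (the paper's edge-path from $\mathbf{0}$ to $\mathbf{1}$ is precisely such a target). The only nitpick is that in the $w<d_y$ case you should rely on the affine-subspace containment rather than the ``image of a map from $\mathbb{R}^w$'' phrasing, since a continuous image of $\mathbb{R}^w$ need not avoid any fixed volume; your quantitative simplex/inradius version is the right one.
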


\change{\bf $L^p$-UAP and $C$-UAP.}
The lemma indicates that $w^*_{\min} \equiv \max(d_x, d_y)$ is a universal lower bound for the UAP in both $L^p(\mathcal{K},\mathbb{R}^{d_y})$ and $C(\mathcal{K},\mathbb{R}^{d_y})$. The main result of this paper illustrates that the minimal width $w^*_{\min}$ can be achieved. We consider the UAP for these two function classes, \emph{i.e.,} $L^p$-UAP and $C$-UAP, respectively. \change{Note that any compact domain can be covered by a big cubic, the functions on the former can be extended to the latter, and the cubic can be mapped to the unit cubic by a linear function. This allows us to assume $\mathcal{K}$ to be a (unit) cubic without loss of generality.}

\subsection{$L^p$-UAP}

\begin{theorem}\label{th:main_LpUAP_leaky_ReLU}
    Let $\mathcal{K} \subset \mathbb{R}^{d_x}$ be a compact set; then, for the function class $L^p(\mathcal{K},\mathbb{R}^{d_y})$, the minimum width of leaky-ReLU networks having $L^p$-UAP is exactly $w_{\min} = \max(d_x, d_y,2)$.
\end{theorem}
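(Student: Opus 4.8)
The plan is to establish the two matching bounds separately. The lower bound $w_{\min} \ge \max(d_x,d_y)$ is already furnished by Lemma~\ref{th:universal_w_min}, so the only additional thing to check on the lower side is that width~$2$ is genuinely necessary when $d_x = d_y = 1$; this should follow from the fact that a width-$1$ leaky-ReLU network is a composition of scalar monotone affine-ish maps, hence itself a monotone (indeed piecewise-linear, strictly monotone) function of one variable, and strictly monotone functions cannot $L^p$-approximate, say, a non-monotone bump on $\mathcal{K}=[0,1]$. So the real content is the upper bound: leaky-ReLU networks of width exactly $\max(d_x,d_y,2)$ have the $L^p$-UAP.

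For the upper bound I would reduce to the square case $N := d_x = d_y \ge 2$, which is the heart of the matter and, per the excerpt, is handled in Section~\ref{sec:case_of_N=d} via neural ODEs. The strategy: given a target $f \in L^p(\mathcal{K},\mathbb{R}^{d})$ with $d\ge 2$, first approximate $f$ by a smooth (or diffeomorphism-like) map, then realize that map as the time-$1$ flow of a well-chosen neural ODE $\dot z = v(z)$ on $\mathbb{R}^d$ — here one invokes the known result (e.g.\ Li et al., Ruiz-Balet--Zuazua) that neural ODEs without extra dimensions have $L^p$-UAP for $d \ge 2$ — and finally discretize the flow by an Euler/splitting scheme so that each step is a leaky-ReLU layer operating in width~$d$. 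The key observation making the width come out to exactly $d$ (rather than $d+1$) is that leaky-ReLU is invertible and its inverse is again leaky-ReLU, so a leaky-ReLU layer $x \mapsto \sigma(Wx+b)$ with invertible $W$ is a bijection of $\mathbb{R}^d$, and a composition of such maps can approximate any orientation-preserving diffeomorphism isotopic to the identity — exactly the maps arising as ODE flows. One then composes with affine maps to realize $d_x \to N \to d_y$ when $d_x \ne d_y$, padding with zeros or projecting, which does not increase the required width beyond $\max(d_x,d_y)$; the subtlety is that when $\min(d_x,d_y)=1$ one still needs the internal width to be $\ge 2$ to have room for the flow argument, which is why the $2$ appears in the formula.

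The main obstacle is the discretization step: turning the continuous flow map into a finite composition of leaky-ReLU layers while controlling the $L^p$ error. Concretely, one must show that (i) the velocity field $v$ produced by the neural ODE universality theorem can be taken piecewise so that each Euler step is \emph{exactly} a leaky-ReLU affine layer (this is where Duan et al.'s observation that an FNN layer is itself a discretization of a neural ODE is used — one wants the field at each time-slice to be affine after a leaky-ReLU nonlinearity), and (ii) the accumulated error of the discretized flow versus the true flow, composed with the outer affine maps, stays small in $L^p$ on the compact set $\mathcal{K}$. Since $L^p$ error (unlike sup-norm error) is forgiving of small-measure bad sets, one can afford to let the flow misbehave near the boundary of where the field is controlled, which is precisely why this construction yields $L^p$-UAP but not $C$-UAP for leaky-ReLU. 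I would also need to handle the corner case $d_x=d_y=1$ of the \emph{upper} bound — showing width $2$ suffices there — which is a separate small construction (embed $[0,1] \hookrightarrow \mathbb{R}^2$, do the $2$-dimensional flow argument, project back), and to double-check that the affine dimension-changing maps at the two ends do not secretly require width $\max(d_x,d_y)+1$.
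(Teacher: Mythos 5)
Your proposal follows essentially the same route as the paper: lower bound from Lemma~\ref{th:universal_w_min} plus the monotonicity obstruction at width one when $d_x=d_y=1$, and upper bound by padding to dimension $d=\max(d_x,d_y)\ge 2$, invoking the $L^p$-universality of neural ODE flow maps for $d\ge 2$, and discretizing the flow into width-$d$ leaky-ReLU layers via the splitting scheme of Duan et al. The only small quibble is that the restriction to $L^p$ (rather than $C$) originates in the first step — flow maps are injective, so they can approximate non-injective continuous targets only in $L^p$ — not in the discretization, which is actually uniform on compact sets.
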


The theorem indicates that leaky-ReLU networks achieve the critical width $w^*_{\min} = \max(d_x, d_y)$, except for the case of $d_x=d_y=1$. The idea is to consider the case where $d_x=d_y=d>1$ and let the network width equal $d$. According to the results of \cite{Duan2022Vanilla}, leaky-ReLU networks can approximate the flow map of neural ODEs. Thus, we can use the approximation power of neural ODEs to finish the proof. \cite{Li2022Deep} proved that many neural ODEs could approximate continuous functions in the $L^p$ norm. This is based on the fact that orientation preserving diffeomorphisms can approximate continuous functions \cite{Brenier2003Approximation}.

The exclusion of dimension one is because of the monotonicity of leaky ReLU. When we add a nonmonotone activation function such as the absolute value function or sine function, the $L^p$-UAP at dimension one can be achieved.

\begin{theorem}\label{th:main_LpUAP_leaky_ReLU+ABS}
    Let $\mathcal{K} \subset \mathbb{R}^{d_x}$ be a compact set; then, for the function class $L^p(\mathcal{K},\mathbb{R}^{d_y})$, the minimum width of leaky-ReLU+ABS networks having $L^p$-UAP is exactly $w_{\min} = \max(d_x, d_y)$.
\end{theorem}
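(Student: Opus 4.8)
The lower bound $w_{\min}\ge\max(d_x,d_y)$ is precisely Lemma~\ref{th:universal_w_min}, so the whole content is the matching upper bound: width-$\max(d_x,d_y)$ leaky-ReLU+ABS networks have the $L^p(\mathcal K,\mathbb R^{d_y})$-UAP. Recall we may take $\mathcal K$ to be the unit cube. I would split on whether $\max(d_x,d_y)\ge 2$. If $\max(d_x,d_y)\ge 2$, then $\max(d_x,d_y,2)=\max(d_x,d_y)$, and since every leaky-ReLU network is in particular a leaky-ReLU+ABS network, Theorem~\ref{th:main_LpUAP_leaky_ReLU} already delivers the bound and nothing new is needed. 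Hence the entire weight of the theorem is the single remaining case $d_x=d_y=1$, i.e.\ width-$1$ leaky-ReLU+ABS networks must be dense in $L^p([0,1],\mathbb R)$. This is exactly where the ABS units are essential: a width-$1$ leaky-ReLU layer $x\mapsto \mathrm{lrelu}_\alpha(wx+b)$ is a monotone homeomorphism of $\mathbb R$, and a composition of such maps with scalar affine maps is again monotone, so width-$1$ leaky-ReLU networks represent only monotone functions and already fail $L^p$-UAP at $d_x=d_y=1$ — this is the source of the $\max(\cdot,2)$ in Theorem~\ref{th:main_LpUAP_leaky_ReLU}. The ABS unit $x\mapsto|wx+b|$ is the one ingredient that folds the line and breaks monotonicity.

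For the construction I would rely on two elementary facts. (i) Width-$1$ leaky-ReLU networks $L^p$-approximate any monotone staircase on $[0,1]$: approximate the staircase first by a steep-ramp non-decreasing piecewise-linear function, then realize the latter by composing $\mathrm{lrelu}_\alpha$'s, building one kink at a time, the fixed leak $\alpha$ being harmless because over a bounded interval a slope $\alpha^k$ is as small as we wish; this is the ``monotone decoder''. (ii) The ABS layer realizes the elementary fold $x\mapsto|x-a|$, which overlays two subintervals and is the only nonmonotone building block we use.

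With these in hand, first reduce to step functions: piecewise-constant functions on finitely many subintervals of $[0,1]$ are $L^p$-dense, so given $s=\sum_{j=1}^n c_j\mathbf 1_{I_j}$ and $\varepsilon>0$ it suffices to build a width-$1$ leaky-ReLU+ABS network $\hat s$ with $\|s-\hat s\|_{L^p}<\varepsilon$. The plan is to realize $s$ recursively in $n$ by a \emph{peel--park--retrieve} step: (a) peel off $I_n$ with a width-$1$ map that sends $\bigcup_{j<n}I_j$ affinely onto $[0,1]$ and $I_n$ into a disjoint ``parking'' interval $[1,B]$; (b) apply, by the induction hypothesis, a width-$1$ network $g$ that $L^p$-approximates on $[0,1]$ the $(n-1)$-piece step function with values $c_1,\dots,c_{n-1}$, arranged (by appending a scaling layer) so that $g$ is linear with a small controlled slope on $[1,\infty)$, hence maps the parked block to a short far-away interval; (c) finish with a width-$1$ subnetwork that is $\approx$ the identity on the (bounded) range of $c_1,\dots,c_{n-1}$ and collapses the parked block onto $\approx c_n$ by one fold followed by enough $\mathrm{lrelu}_\alpha$'s to squash it to a point. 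Because the parked block is pushed off the range of the real values, the ``glue'' subnetwork only needs to behave arbitrarily on a set of inputs of small measure, which is $L^p$-negligible; the steep ramps realizing the jumps of $s$ are likewise $L^p$-negligible.

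The main obstacle is this width-$1$ bookkeeping: there is a single scalar channel, so the parked interval and the recursively produced values flow through the same number and the peel/park/retrieve step must be tuned (choice of $B$, of $g$'s terminal slope, of the fold threshold) so that they never collide in a way that $L^p$ matters; one must also check that the Lipschitz blow-up of the affine rescalings (a factor growing with $n$) and the transition error accumulated over the $n$ stages can be absorbed by choosing each stage accurate enough, so the total stays below $\varepsilon$. Everything else — the reduction to the unit cube, the reduction to step functions, facts (i) and (ii), and the reduction of the case $\max(d_x,d_y)\ge 2$ to Theorem~\ref{th:main_LpUAP_leaky_ReLU} — is routine.
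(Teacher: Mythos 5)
Your reduction is exactly the paper's: the lower bound is Lemma~\ref{th:universal_w_min}, the case $\max(d_x,d_y)\ge 2$ is Theorem~\ref{th:main_LpUAP_leaky_ReLU} verbatim (a leaky-ReLU network is a leaky-ReLU+ABS network), and the entire remaining content is that width-$1$ leaky-ReLU+ABS networks are $L^p$-dense in dimension one. Where you diverge is in that one-dimensional step. The paper (Lemma~\ref{th:1D_UAP}) does not induct on step functions; it perturbs $f^*$ in $L^p$ to a continuous function $f$ whose local maxima are all equal and whose local minima are all equal, so that $f$ has the \emph{same ordering of extrema} as a sawtooth, and then invokes the exact factorization $f=v\circ(\mathrm{sawtooth})\circ u$ with $u,v$ strictly monotone (Lemma~\ref{lemma:appendix}); the sawtooth is a composition of ABS units and $u,v$ are uniformly approximable by width-$1$ leaky-ReLU networks. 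That route needs no parking interval, no collision analysis, and no error accumulation over $n$ stages.

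Your peel--park--retrieve induction is not obviously unworkable, but as written it has two genuine gaps. First, in step (b) you claim $g$ can be ``arranged (by appending a scaling layer) so that $g$ is linear with a small controlled slope on $[1,\infty)$.'' A width-$1$ scaling layer acts globally: it cannot change $g$ on $[1,\infty)$ without changing it on $[0,1]$, and the inductively built $g$ contains ABS folds that may throw the parked block back into the active value range. What actually rescues the step is something you did not say: $g$ is piecewise linear with finitely many pieces, hence affine (generically with nonzero slope) on some ray $[R,\infty)$, so one must build $g$ \emph{first} and only then choose the parking interval inside $[R,\infty)$, far enough out that its image under $g$ misses the value range $[-C,C]$. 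Second, the retrieve step (c) is the crux and is only asserted: you need a width-$1$ leaky-ReLU+ABS network that is approximately the identity on the finite set $\{c_1,\dots,c_{n-1}\}\subset[-C,C]$ while mapping a disjoint far-away short interval onto $c_n$. A single fold at $C'\ge C$ sends $C'+\delta$ to $C'-\delta$, so hitting $c_n$ requires first squashing the parked block and repositioning it to exactly $C'+(C'-c_n)$ using slope-ratio kinks $\alpha^{\pm k}$ and paired kinks for fine translation, all while fixing $(-\infty,C']$ --- none of which appears in your sketch. Both gaps look fixable, but filling them essentially rebuilds, stage by stage, the ``monotone maps composed with folds'' calculus that the paper's ordering-of-extrema lemma packages once; for the $d_x=d_y=1$ case I would switch to that argument.
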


\subsection{$C$-UAP}

$C$-UAP is more demanding than $L^p$-UAP. However, if the activation functions could include discontinuous functions, the same critical width $w^*_{\min}$ can be achieved. Following the encoder-memory-decoder approach in \cite{Park2021Minimum}, the step function is replaced by the floor function, and one can obtain the minimal width $w_{\min}=\max(d_x,2,d_y)$.

\begin{lemma}\label{th:C-UAP_ReLU+FLOOR}
    Let $\mathcal{K} \subset \mathbb{R}^{d_x}$ be a compact set; then, for the function class $C(\mathcal{K},\mathbb{R}^{d_y})$, the minimum width of ReLU+FLOOR networks having $C$-UAP is exactly $w_{\min} = \max(d_x, 2, d_y)$.
\end{lemma}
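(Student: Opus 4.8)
### Proof plan for Lemma~\ref{th:C-UAP_ReLU+FLOOR}

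\textbf{Overview.} The statement has two halves. The lower bound $w_{\min} \ge \max(d_x, d_y)$ is already delivered by Lemma~\ref{th:universal_w_min}, so for the lower bound it only remains to show that width $1$ is insufficient even when $d_x = d_y = 1$, i.e.\ that width-$1$ ReLU+FLOOR networks do not have $C$-UAP on a compact interval; hence $w_{\min} \ge 2$ in that case, and combining gives $w_{\min} \ge \max(d_x, 2, d_y)$. The bulk of the work is the matching upper bound: constructing, for any $g \in C(\mathcal{K}, \mathbb{R}^{d_y})$ and any $\varepsilon > 0$, a ReLU+FLOOR network of width $\max(d_x, 2, d_y)$ that is uniformly $\varepsilon$-close to $g$ on $\mathcal{K}$. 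I would follow the encoder–memorizer–decoder (EMD) scheme of \citet{Park2021Minimum}, substituting FLOOR for STEP to save one unit of width in the encoder and decoder stages.

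\textbf{Upper bound construction.} Assume WLOG $\mathcal{K} = [0,1]^{d_x}$ (as noted in the excerpt). First partition $\mathcal{K}$ into $K^{d_x}$ subcubes of side $1/K$ with $K$ large enough that $g$ varies by less than $\varepsilon/2$ on each (uniform continuity); it suffices to build a network that is exactly correct at one representative point of each subcube and whose output on each subcube is constant, or at least controlled. \emph{Encoder:} map $\mathcal{K}$ into $\mathbb{R}^1$ by a quantize-then-interleave map: apply $x_j \mapsto \lfloor K x_j \rfloor$ coordinatewise and combine the $d_x$ integer codes into a single integer via a base-$K$ (or rapidly growing base) Horner scheme. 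Each step of Horner's rule — multiply the running total by the base, add the next quantized coordinate — is an affine map composed with a FLOOR, and can be carried in width $d_x$ (keep the running scalar in one coordinate, stream the remaining inputs through the others); at worst this needs one auxiliary coordinate, which is why $2$ appears. This yields a piecewise-constant map $\mathcal{K} \to \{0, 1, \dots, K^{d_x}-1\} \subset \mathbb{R}$. \emph{Memorizer:} a width-$1$ (scalar in, scalar out) network that is a continuous, piecewise-linear interpolant sending each integer code $c$ to a value $y_c \in \mathbb{R}$ encoding the desired $d_y$-dimensional output of the subcube labelled $c$ — e.g.\ $y_c$ is a single real whose truncated base-$M$ digits store the $d_y$ target coordinates to accuracy $\varepsilon/2$. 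This stage is ReLU-only and one-dimensional; on the (discrete) set of codes it is exact, and since the encoder output is locally constant the interpolation between codes is never exercised (or is harmless). \emph{Decoder:} unpack the scalar $y_c$ back into $\mathbb{R}^{d_y}$ by repeatedly extracting a digit via FLOOR and affine maps — again width $\max(d_x, 2, d_y)$ suffices since at any moment we hold the remaining scalar plus a partial output vector. Composing the three stages gives a ReLU+FLOOR network of width $\max(d_x, 2, d_y)$ whose output on each subcube is within $\varepsilon/2 + \varepsilon/2 = \varepsilon$ of $g$ uniformly, except possibly on a set we can ignore — but because FLOOR makes the encoder genuinely piecewise constant (not merely approximately so), the approximation is in fact uniform on all of $\mathcal{K}$, which is what $C$-UAP demands.

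\textbf{The width-$1$ impossibility (the delicate point).} This is where I expect the real obstacle. A width-$1$ ReLU+FLOOR network $f:\mathbb{R}\to\mathbb{R}$ is a composition $f = A_{L+1}\circ \phi_L \circ A_L \circ \cdots \circ \phi_1 \circ A_1$ where each $A_i$ is affine $\mathbb{R}\to\mathbb{R}$ and each $\phi_i \in \{\text{ReLU}, \text{FLOOR}\}$. I would argue by a monotonicity/structure invariant: ReLU and FLOOR are both nondecreasing, and affine maps $x \mapsto ax+b$ are monotone (increasing if $a>0$, decreasing if $a<0$, constant if $a=0$). Hence $f$ is a composition of monotone scalar functions and is therefore itself monotone (its ``direction'' is the product of the signs of the slopes; a zero slope collapses to a constant). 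A monotone function on $[0,1]$ cannot uniformly approximate, say, $g(x) = \sin(2\pi x)$, which is non-monotone by a definite margin. This rules out $w=1$ when $d_x=d_y=1$, giving $w_{\min}\ge 2$ there; for all other $(d_x,d_y)$ the bound $\max(d_x,d_y)$ from Lemma~\ref{th:universal_w_min} already dominates, so in every case $w_{\min}\ge\max(d_x,2,d_y)$, matching the construction. The one subtlety to handle carefully is that FLOOR is only weakly monotone and is discontinuous, so ``monotone composition'' must be stated for the (possibly discontinuous, weakly increasing) class; but the key fact — the composite is weakly monotone, hence uniformly far from any strictly oscillating target — survives, and that is all that is needed.
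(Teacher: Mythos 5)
Your overall architecture (encoder--memorizer--decoder for the upper bound, plus the monotonicity argument for the width-$1$ impossibility) is exactly the paper's, and the lower-bound half is fine. But there is one genuine error in the upper bound, and it sits precisely at the point that explains where the ``$2$'' in $\max(d_x,2,d_y)$ comes from. You claim the memorizer is ``ReLU-only and one-dimensional,'' i.e.\ a width-$1$ ReLU network realizing a piecewise-linear interpolant that sends each integer code $c$ to an essentially arbitrary value $y_c$. This contradicts your own lower-bound argument: a width-$1$ network built from affine maps and ReLU (or FLOOR) is a composition of scalar monotone functions and is therefore monotone, whereas the assignment $c \mapsto y_c$ is non-monotone for a generic target $g$ (and you cannot reorder the codes to make it monotone, since the encoder's code ordering is fixed by the interleaving while $g$ is arbitrary). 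So a width-$1$ ReLU memorizer cannot exist. The paper's resolution, and the actual source of the ``$2$,'' is that the memorizer is a scalar continuous function handled by a \emph{width-$2$} ReLU network (width $d_x+1=2$ is exactly the $C$-UAP threshold for $C([0,1],\mathbb{R})$ with ReLU, per \citet{Hanin2018Approximating}); replacing ReLU by the non-monotone UOE activation is what lets the memorizer drop to width $1$ and gives $\max(d_x,d_y)$ in Corollary~\ref{th:C-UAP_ReLU+FLOOR+UOE}. Correspondingly, your attribution of the extra unit of width to an ``auxiliary coordinate'' in the encoder's Horner scheme is misplaced: the encoder can be carried out in width $d_x$ (quantize all coordinates in parallel with FLOOR, then fold them into one scalar with affine layers, discarding each consumed coordinate), and the decoder in width $d_y$. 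Once the memorizer is assigned width $2$, the rest of your construction and the final bound $\max(d_x,2,d_y)$ go through as in the paper.
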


Since ReLU and FLOOR are monotone functions, the $C$-UAP critical width $w^*_{\min}$ does not hold for $C([0,1],\mathbb{R})$. This seems to be the case even if we add ABS or SIN as an additional activator. However, it is still possible to use the UOE function (Definition \ref{def:appendix}).

\begin{theorem}\label{th:C-UAP_UOE}
    The UOE networks with width $d_y$ have $C$-UAP for functions in $C([0,1],\mathbb{R}^{d_y})$.
\end{theorem}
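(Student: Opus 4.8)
The plan is to reduce the $d_y$-dimensional statement to $d_y$ parallel copies of a one-dimensional construction, each running in its own coordinate, so that a single UOE neuron per coordinate suffices at each hidden layer. First I would recall what the UOE property buys us: a continuous scalar activation $\rho$ with \emph{universal ordering of extrema} has, on suitable intervals, local maxima and minima occurring in any prescribed finite order and with (after affine pre- and post-composition) prescribed values. Concretely, given a target $g \in C([0,1],\mathbb{R})$, approximate it uniformly by a continuous piecewise-linear function $\tilde g$ with breakpoints $0 = t_0 < t_1 < \dots < t_m = 1$; the values $\tilde g(t_0), \dots, \tilde g(t_m)$ form a finite sequence whose pattern of ups and downs is some sign sequence. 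The UOE property lets us choose an affine map $x \mapsto ax+b$ of the input and an affine map $z \mapsto cz+e$ of the output such that $c\,\rho(ax+b)+e$ has exactly this sequence of extremal values at the right locations, hence uniformly approximates $\tilde g$, hence $g$, to any accuracy. This handles $d_y = 1$ with a depth-$2$ (one hidden layer, one neuron) UOE network; it is essentially the content of the one-dimensional case flagged as basic in Section~\ref{sec:case_of_d=1}.

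For general $d_y$, write the target as $f = (f_1, \dots, f_{d_y}) \in C([0,1],\mathbb{R}^{d_y})$. The naive idea of running the $d_y$ one-dimensional networks in parallel fails at face value, because each needs to read the input $x$ but the hidden layer has only $d_y$ channels and no spare channel to carry $x$ forward once we have overwritten a channel. The fix is to process the coordinates \emph{sequentially within the width-$d_y$ stream} while always preserving $x$ in one of the channels until it is no longer needed. Maintain the invariant that after processing coordinate $j$, the hidden state is $(f_1(x), \dots, f_j(x), x, *, \dots, *)$ — i.e.\ the already-computed outputs sit in the first $j$ channels, the input $x$ is still available in channel $j+1$, and the remaining channels are scratch. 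A UOE layer acting on channel $j+1$ (reading $x$) together with affine bookkeeping (which a single linear layer composed around the activation provides) computes $f_{j+1}(x)$ into channel $j+1$ while copying $x$ down into channel $j+2$, leaving channels $1,\dots,j$ untouched. One subtlety: the first hidden layer only has width $d_y$, so for $d_y = 1$ we never have room to both keep $x$ and store an output — but then there is only one output coordinate and the depth-$2$ construction above already suffices, so this is not a gap. For $d_y \ge 2$ the invariant goes through for $j = 1, \dots, d_y - 1$, and a final linear (output) layer reads $(f_1(x),\dots,f_{d_y-1}(x), x)$ and applies one more UOE-based block — or, more cleanly, absorb the computation of $f_{d_y}$ into the last hidden layer and let the output layer just be a projection. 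The total width used is exactly $\max(d_x, d_y) = d_y$ since $d_x = 1 \le d_y$.

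The main obstacle I anticipate is making the ``copy $x$ while simultaneously computing $f_{j+1}(x)$ in the same hidden layer of width $d_y$'' step fully rigorous with a \emph{single} activation type, because the identity/copy operation is not something a nonlinear activation performs natively — it must be synthesized. For leaky-ReLU this is trivial (leaky-ReLU restricted to a shifted domain where the argument is positive is affine, so one can copy and affinely transform at will), but UOE is a genuinely nonmonotone, wiggly function, and we need to ensure that on the relevant channel the affine pre/post-composition can be chosen so that the net effect on the ``pass-through'' channels is the identity up to the small uniform error we are willing to tolerate. The cleanest route is probably to show the UOE function is, after affine rescaling, approximately linear on some subinterval (which it must be, being continuous), use that subinterval as the ``transport lane'' for $x$, and simultaneously exploit the full extremal flexibility of UOE on a \emph{different} input subinterval for the nonlinear computation of $f_{j+1}$ — but since each neuron sees only one affine function of the previous layer, one actually needs two neurons' worth of behavior per step unless the pass-through is handled by the linear weight matrices alone (which it can be, since $W_{j}$ can route channel $j+1$'s pre-activation value to channel $j+2$ directly, bypassing $\rho$, only if $\rho$ is applied coordinatewise \emph{after} $W$ — and in the standard FNN form \eqref{eq:FNN} it is, but then every channel gets hit by $\rho$). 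Resolving this tension — likely by a small-perturbation / near-linearity argument on UOE combined with careful choice of the working intervals so that accumulated errors across the $d_y - 1$ stages stay below $\epsilon$ — is where the real work lies; everything else is the standard piecewise-linear approximation plus bookkeeping.
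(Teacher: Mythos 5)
Your overall reduction (treat $f=(f_1,\dots,f_{d_y})$ coordinatewise and use the one-dimensional UOE result) is the same as the paper's, but you misdiagnose the key step and this leads you into an unnecessarily complicated construction plus one genuine error. The obstruction you raise against the ``naive'' parallel construction is not real: since $d_x=1$, the first affine layer $W_1x+b_1$ can broadcast $x$ into all $d_y$ channels, after which each channel independently runs its own deep width-one UOE network for $f_j$ with diagonal weight matrices; no channel ever needs to re-read $x$ after the first layer, because each channel carries its own evolving scalar state. This is exactly the paper's (one-sentence) proof: decompose into $d_y$ scalar functions, apply Lemma~\ref{th:1D_UAP} to each, and stack in parallel. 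The only synchronization issue (unequal depths) and your worried-about ``transport lane'' are both trivial for UOE, because $\rho(x)=x/4$ on $(-\infty,0]$, so on any compact set the identity is \emph{exactly} representable by a single UOE neuron after an affine shift into the negative half-line --- no near-linearity or error-accumulation argument is needed. Your sequential scheme with the invariant $(f_1(x),\dots,f_j(x),x,*,\dots)$ can be made to work within width $d_y$ (consuming the $x$-channel for the last coordinate), but it buys nothing over the parallel one.

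The genuine gap is your base case: a depth-$2$ network $c\,\rho(ax+b)+e$ does \emph{not} suffice for $d_y=1$. An affine pre-composition can only rescale and translate the locations of $\rho$'s extrema (which sit at the integers), so it cannot place them at the arbitrary breakpoints $t_1<\dots<t_m$ of $\tilde g$; and an affine post-composition maps the extremal values $o_i$ to $c\,o_i+e$, which matches only their \emph{ordering}, not their values. This is precisely why the paper's Lemma~\ref{lemma:appendix} writes the target as $v\circ\rho\circ u$ with $u,v$ strictly monotone \emph{continuous} (not affine) functions, each of which must then itself be approximated by a deep width-one UOE subnetwork (possible because $\rho$ reproduces leaky-ReLU on bounded intervals). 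So the one-dimensional case requires depth growing with the target's complexity; with that corrected, your argument reduces to the paper's.
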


\begin{corollary}\label{th:C-UAP_ReLU+FLOOR+UOE}
    Let $\mathcal{K} \subset \mathbb{R}^{d_x}$ be a compact set; then, for the continuous function class $C(\mathcal{K},\mathbb{R}^{d_y})$, the minimum width of UOE+FLOOR networks having $C$-UAP is exactly $w_{\min} = \max(d_x, d_y)$.
\end{corollary}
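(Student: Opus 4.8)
The plan is to combine the two ingredients that the paper has already established: the UOE-network construction of Theorem \ref{th:C-UAP_UOE}, which handles one-dimensional inputs with width exactly $d_y$, and the encoder-memorizer-decoder scheme behind Lemma \ref{th:C-UAP_ReLU+FLOOR}, in which the role of the discontinuous quantizer is played by the FLOOR activation. The lower bound $w_{\min}\ge\max(d_x,d_y)$ is immediate from Lemma \ref{th:universal_w_min}, so the entire content is the matching upper bound: UOE+FLOOR networks of width $\max(d_x,d_y)$ have $C$-UAP for $C(\mathcal{K},\mathbb{R}^{d_y})$, where without loss of generality $\mathcal{K}=[0,1]^{d_x}$.

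First I would set up the encoder. Given $f\in C([0,1]^{d_x},\mathbb{R}^{d_y})$ and a target accuracy $\varepsilon>0$, use uniform continuity to pick a resolution $n$ so that $f$ varies by less than $\varepsilon$ on each small cube of a grid of side $1/n$. The encoder must compress the $d_x$ input coordinates into a single scalar "address'' labelling the grid cell, using width at most $d_x$; this is the standard quantize-then-interleave-digits trick (apply FLOOR coordinatewise after scaling by $n$, then fold the $d_x$ integer digit-strings into one real number via a base-$n$ expansion), and it is done in-place so width $d_x$ suffices. The only subtlety is that FLOOR is discontinuous, which is exactly why we only claim $C$-UAP after this point and why a small "bad set'' of measure — or rather, here, a thin neighbourhood of the grid faces — must be tolerated; but since we are approximating a continuous target by a piecewise-constant surrogate, we handle the boundary layer by the usual argument (shrink it and note $f$ is bounded, or observe the surrogate can be made continuous by the memorizer/decoder stages so that the discontinuity is confined to the FLOOR composition and still yields uniform approximation of $f$ up to $\varepsilon$).

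Next comes the memorizer-decoder. After encoding, the problem reduces to a one-dimensional task: realize a continuous (indeed, it suffices to realize a function agreeing at finitely many scalar addresses with the prescribed values of $f$ on the grid, interpolated continuously) map $[0,1]\to\mathbb{R}^{d_y}$. This is precisely the regime of Theorem \ref{th:C-UAP_UOE}: UOE networks of width $d_y$ approximate any function in $C([0,1],\mathbb{R}^{d_y})$ uniformly. Because the encoder used width $d_x$ and the decoder uses width $d_y$, and the two stages are composed (not run in parallel), the overall width is $\max(d_x,d_y)$ — here it is important that the FLOOR-based encoder's output can be passed through without needing an extra "carry'' coordinate, and that the UOE stage does not need auxiliary neurons beyond $d_y$; both facts are inherited from Lemma \ref{th:C-UAP_ReLU+FLOOR} and Theorem \ref{th:C-UAP_UOE} respectively. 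Chaining the three $\varepsilon/3$-type estimates (grid approximation of $f$, exact/encoded addressing, UOE approximation of the decoder map) gives uniform error $<\varepsilon$.

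The main obstacle I anticipate is the width bookkeeping at the seam between the encoder and the decoder: one must verify that the scalar address produced by the FLOOR-encoder can be consumed by the UOE-decoder without ever simultaneously needing more than $\max(d_x,d_y)$ live neurons — in particular, that no transitional layer needs to hold both the full input vector and an extra accumulator, and that the decoder can be "primed'' from a single scalar. This is a matter of carefully threading the constructions of Lemma \ref{th:C-UAP_ReLU+FLOOR} and Theorem \ref{th:C-UAP_UOE} together, reusing coordinates as the computation flows forward, and it is the place where the $+1$ or $+2$ overheads of weaker constructions would creep back in if one is not careful. A secondary, more routine point is making the composite function genuinely continuous (so that the claim is really about $C$-UAP and not merely $L^p$-UAP): since UOE and ReLU are continuous, the only discontinuity sits in the FLOOR applications inside the encoder, and one argues as in \citet{Park2021Minimum} that these can be absorbed — e.g., by a final clean-up that makes the overall map continuous while changing it by less than $\varepsilon$ — so that uniform approximation of the continuous target $f$ still holds.
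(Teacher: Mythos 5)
Your proposal is correct and follows essentially the same route as the paper: the lower bound comes from Lemma \ref{th:universal_w_min}, and the upper bound is the encoder--memorizer--decoder scheme of Lemma \ref{th:C-UAP_ReLU+FLOOR} with the one-dimensional bottleneck handled by width-$d_y$ UOE networks via Theorem \ref{th:C-UAP_UOE} (the paper keeps the scalar memorizer and the FLOOR decoder separate, whereas you merge them into a single UOE stage --- a harmless repackaging). The only superfluous element is your final ``continuity clean-up'': $C$-UAP here only requires the (inherently discontinuous, because of FLOOR) network to be uniformly close to the continuous target, which the piecewise-constant surrogate already is on every grid cell including its boundary, so no such step is needed.
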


\section{Approximation in dimension one ($N=d_x=d_y=d=1$)}
\label{sec:case_of_d=1}

In this section, we consider one-dimensional functions and neural networks with a width of one. In this case, the expression of ReLU networks is extremely poor. Therefore, we consider the leaky ReLU activation $\sigma_\alpha(x)$ with a fixed parameter $\alpha\in(0,1)$.
Note that leaky-ReLU is strictly monotonic, and it was proven by \cite{Duan2022Vanilla} that any monotone function in $C([0,1],\mathbb{R})$ can be uniformly approximated by leaky-ReLU networks with width one. This is useful for our construction to approximate nonmonotone functions. Since the composition of monotone functions is also a monotone function, to approximate nonmonotone functions we need to add a nonmonotone activation function.

Let us consider simple nonmonotone functions, such as $|x|$ or $\sin(x)$. We show that leaky-ReLU+ABS or leaky-ReLU+SIN can approximate any continuous function $f^*(x)$ under the $L^p$ norm. 
The idea, shown in Figure \ref{fig:alpha_profile}, is that the target function $f^*(x)$ can be uniformly approximated by the polynomial $p(x)$, which can be represented as the composition
$$g \circ u(x) =p(x) \approx f^*(x) .$$

Here, the outer function $g(x)$ is any continuous function whose value at extrema matches the value at extrema of $p(x)$, and the inner function $u(x)$ is monotonically increasing, which adjusts the location of the extrema (see Figure \ref{fig:alpha_profile}). Since polynomials have a finite number of extrema, the inner function $u(x)$ is piecewise continuous.

\begin{figure}[htp!]
    \center
    \includegraphics[width=11cm]{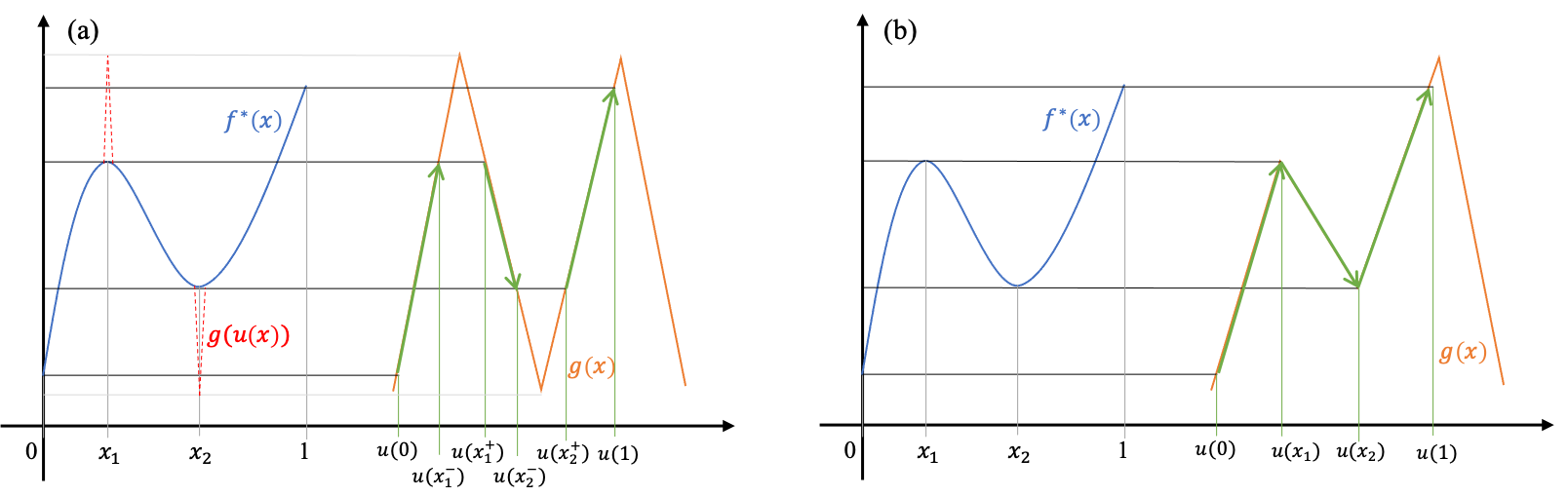}
    \caption{Example of approximating/representing a polynomial by the composition of a monotonically increasing function $u(x)$ and a nonmonotone function $g(x)$. (a) only matching the ordering of extrema values, (b) matching the values as well. }
    \label{fig:alpha_profile}
\end{figure}

For $L^p$-UAP, the approximation is allowed to have a large deviation on a small interval; therefore, the extrema could not be matched exactly (over a small error). 
For example, we can choose $g(x)$ as the sine function or the sawtooth function (which can be approximated by ABS networks), and $u(x)$ is a leaky-ReLU network approximating $g^{-1}\circ p(x)$ at each monotone interval of $p$. Figure \ref{fig:alpha_profile}(a) shows an example of the composition.

For $C$-UAP, matching the extrema while keeping the error small is needed. To achieve this aim, we introduce the UOE functions.

\begin{definition}[Universal ordering of extrema (UOE) functions]\label{def:UOE}
    A UOE function is a continuous function in $C(\mathbb{R},\mathbb{R})$ such that any (finite number of) possible ordering(s) of values at the (finite) extrema can be found in the extrema of the function.
\end{definition}
There are an infinite number of UOE functions. Here, we give an example, as shown in Figure~\ref{fig:Activation_UOE}. This UOE function $\rho(x)$ is defined by a sequence $\{o_i\}_{i=1}^\infty$,
\begin{align}\label{eq:UOE}
    \rho(x) = 
    \left\{
    \begin{array}{lll}
    x/4, && x \le 0,  \\
    o_i + (x-i)(o_{i+1}-o_i), && x \in [i,i+1),
    \end{array}
    \right.
\end{align}
where 
$
\{o_i\}_{i=1}^\infty = 
(
    \underline{1,2},
    \underline{2,1},
    \underline{1,2,3},
    \underline{1,3,2},
    \underline{2,1,3},
    \underline{2,3,1},
    \underline{3,1,2},
    \underline{3,2,1},
    \underline{1,2,3,4},...)
$
is the concatenation of all permutations of positive integer numbers. The term UOE in this paper means this function $\rho$. Since the UOE function $\rho(x)$ can represent leaky-ReLU $\sigma_{1/4}$ on any finite interval, this implies that the UOE networks can uniformly approximate any monotone functions.

\begin{figure}[htp!]
    \center
    \includegraphics[width=9cm]{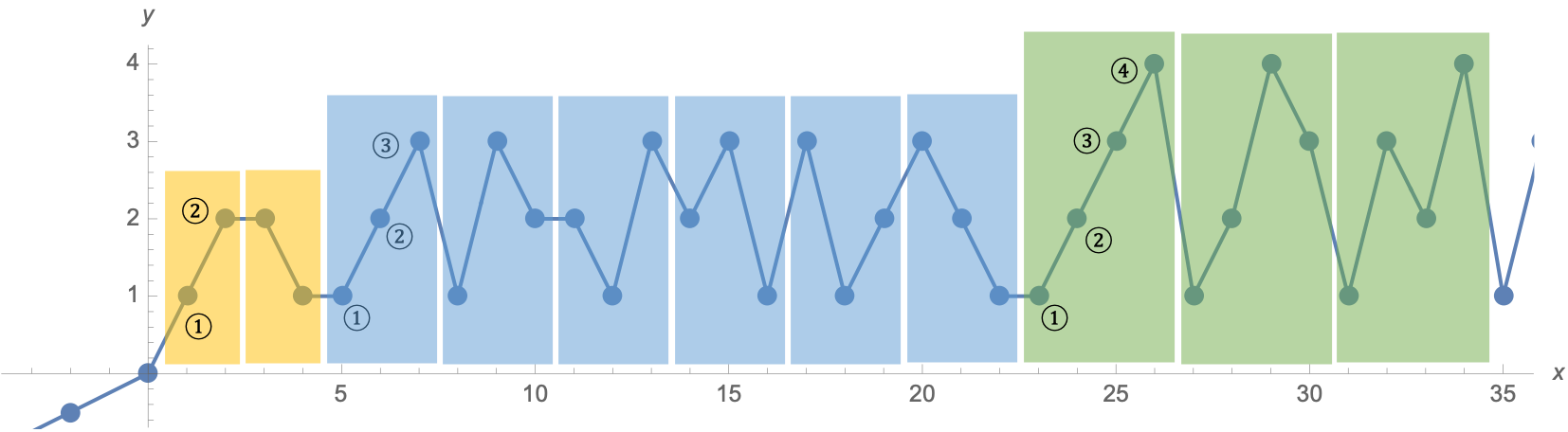}
    \caption{An example of the UOE function $\rho(x)$, which has an infinite number of pieces.}
    \label{fig:Activation_UOE}
\end{figure}

To illustrate the $C$-UAP of UOE networks, we only need to construct a continuous function $g(x)$ matching the extrema of $p(x)$ (see Figure~\ref{fig:alpha_profile}(b)). That is, construct $g(x)$ by the composition $\tilde u \circ \rho(x)$, where $\tilde u(x)$ is a monotone and continuous function. This is possible since the UOE function contains any ordering of the extrema.

The following lemma summarizes the approximation of one-dimensional functions. As a consequence, Theorem \ref{th:C-UAP_UOE} holds since functions in $C([0,1],\mathbb{R}^{d_y})$ can be regarded as $d_y$ one-dimensional functions.
\begin{lemma}\label{th:1D_UAP}
    For any function $f^*(x) \in C[0,1]$ and $\varepsilon>0$, there is a leaky-ReLU+ABS (or leaky-ReLU+SIN) network with width one and depth $L$ such that
$
\int_0^1 |f^*(x)-f_L(x)|^p dx < \varepsilon^p.
$
    There is a leaky-ReLU+UOE network with a width of one and a depth of $L$ such that
    $
|f^*(x)-f_L(x)| < \varepsilon, \forall x\in [0,1].
$
\end{lemma}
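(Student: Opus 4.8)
The plan is to realize the composition picture of Figure~\ref{fig:alpha_profile} rigorously in three stages. First I would reduce to polynomials: by the Weierstrass approximation theorem, pick a polynomial $p(x)$ with $\sup_{x\in[0,1]}|f^*(x)-p(x)| < \varepsilon/3$, so it suffices to approximate $p$. Since $p$ is a polynomial it has finitely many extrema, say at $0 = t_0 < t_1 < \cdots < t_m < t_{m+1} = 1$, partitioning $[0,1]$ into finitely many closed intervals on each of which $p$ is strictly monotone. Record the sequence of extreme values $p(t_0), p(t_1), \dots, p(t_{m+1})$; the $L^p$ and $C$ constructions diverge in how faithfully this data is reproduced.

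For the $L^p$ statement I would take the outer function $g$ to be a fixed nonmonotone ``template'' — the sawtooth, built from ABS via $g(x) = \big||x \bmod 2| - 1\big|$ type pieces, or $g = \sin$ — and build the inner map $u$ as a leaky-ReLU network with width one. On each monotone piece $[t_{j-1}, t_j]$ of $p$, choose a monotone branch of $g$ and define $u$ on that piece to be the (monotone, continuous) function $g^{-1}\circ p$; because leaky-ReLU width-one networks uniformly approximate any monotone continuous function on a compact interval (Duan et al., as quoted above), each such branch is approximable, and concatenating the branch-networks across the $m+1$ pieces yields a single width-one leaky-ReLU network approximating $u$ in $C([0,1])$ except possibly on small neighborhoods of the breakpoints $t_j$, where $u$ has jumps. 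On those finitely many short intervals (total length $\delta$) the error is bounded by the oscillation of $g\circ(\text{network})$, hence by a fixed constant; choosing $\delta$ small makes the $L^p$ contribution $< (\varepsilon/3)$. Composing with $g$ (a fixed Lipschitz or ABS/SIN network) then gives $f_L = g\circ u_{\mathrm{net}}$ with $\int_0^1 |p - f_L|^p\,dx < (\varepsilon/3)^p \cdot(\text{const})$, which together with the polynomial step yields the claim after rescaling $\varepsilon$.

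For the $C$ statement the sawtooth template fails because its extreme values are all equal, so it cannot match the ordering of the values $p(t_j)$ while staying uniformly close. This is where the UOE function $\rho$ enters: by Definition~\ref{def:UOE}, the finite ordering pattern of $\big(p(t_0),\dots,p(t_{m+1})\big)$ occurs among the extrema of $\rho$ on some finite window $[a,b]$. I would then pick a monotone continuous $\tilde u$ mapping the successive extreme values of $\rho$ on $[a,b]$ exactly onto $p(t_0),\dots,p(t_{m+1})$ (order-preserving interpolation of the recorded values, which is possible precisely because the two orderings agree), and a monotone continuous inner map $v:[0,1]\to[a,b]$ sending $t_j$ to the location of the $j$-th relevant extremum of $\rho$; then $g := \tilde u\circ\rho$ satisfies $g(v(t_j)) = p(t_j)$ for all $j$, and $g\circ v$ is continuous and agrees with $p$ at every extremum. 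Since $g\circ v$ is continuous, piecewise monotone with the same monotonicity pattern as $p$ and the same extreme values, a uniform estimate gives $\sup_{[0,1]}|p - g\circ v|$ small (one can further refine the breakpoints of $v$ to force this). Finally realize $v$ by a width-one leaky-ReLU+UOE network (leaky-ReLU for the monotone $v$, one UOE neuron to produce $\rho$, then $\tilde u$ absorbed by another monotone leaky-ReLU block, using that $\rho$ itself reproduces $\sigma_{1/4}$ so the monotone blocks can also be built from $\rho$), yielding $|f^*(x) - f_L(x)| < \varepsilon$ on $[0,1]$.

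The main obstacle I anticipate is the behavior near the breakpoints $t_j$: in the $L^p$ case the inner map genuinely jumps, so one must carefully quantify that the overshoot there lives on a set of arbitrarily small measure while the outer $g$ stays bounded; in the $C$ case one must ensure that approximating $v$ by an actual width-one network (rather than the idealized monotone $v$) perturbs the images of the extrema by less than $\varepsilon$ and does not destroy the monotonicity pattern — this requires that $g = \tilde u\circ\rho$ be uniformly continuous (it is, on the compact window) so that small input errors give small output errors, and a little care that no spurious extrema are introduced, which holds because $v$ stays within a single monotone stretch of $\rho$ between consecutive chosen extrema.
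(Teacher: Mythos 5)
Your proposal is correct and follows essentially the same route as the paper: a Weierstrass reduction to a polynomial, then a decomposition into a monotone inner map composed with a nonmonotone outer template, realized by width-one monotone (leaky-ReLU) blocks, with the UOE function supplying a window whose extrema reproduce the ordering of the polynomial's extreme values for the uniform case. The only divergence is in the $L^p$ case: the paper first perturbs the target in $L^p$ so that all its local maxima (resp.\ minima) coincide, making it share the sawtooth's ordering of extrema and hence \emph{uniformly} approximable, whereas you keep the target, allow the inner map to jump at the breakpoints, and absorb the resulting error into small-measure neighborhoods --- both versions are valid.
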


\section{Connection to the neural ODEs ($N=d_x=d_y=d\ge2$)}
\label{sec:case_of_N=d}

Now, we turn to the high-dimensional case and connect the feed-forward neural networks to neural ODEs. To build this connection, we assume that the input and output have the same dimension, $d_x=d_y=d$.

Consider the following neural ODE with one-hidden layer neural fields:
\begin{align}\label{eq:neural_ODE}
    \left\{
    \begin{aligned}
    &\dot{x}(t) = v(x(t),t) := A(t)\tanh(W(t)x(t)+b(t)), t\in(0,\tau),\\
    &x(0)=x_0,
    \end{aligned}
    \right.
\end{align}
where $x,x_0,\in \mathbb{R}^d$ and the time-dependent parameters $(A,W,b)\in \mathbb{R}^{d\times d}\times \mathbb{R}^{d\times d} \times \mathbb{R}^d$ are piecewise constant functions of $t$.
The flow map is denoted as $\phi^\tau(\cdot)$, which is the function from $x_0$ to $x(\tau)$.
According to the approximation results of neural ODEs (see \cite{Li2022Deep, Tabuada2020Universal, Ruiz-Balet2021Neural} for examples), we have the following lemma.
\begin{lemma} [Special case of \cite{Li2022Deep} ]
    \label{th:ODE_Lp_approximation_C}
    Let $d \ge 2$. Then, for any continuous function $f^*:\mathbb{R}^d \rightarrow \mathbb{R}^d$,
    any compact set $\mathcal{K} \subset \mathbb{R}^d$,
    and any $\varepsilon >0$,
    there exist a time $\tau \in \mathbb{R}^+$ and a piecewise constant input $(A,W,b):[0,\tau]\rightarrow \mathbb{R}^{d\times d}\times \mathbb{R}^{d\times d}\times \mathbb{R}^d$
    so that the flow-map $\phi^{\tau}$ associated with the neural ODE (\ref{eq:neural_ODE})
    satisfies:
    $||f^*-\phi^{\tau}||_{L^p(\mathcal{K})} \leq \varepsilon.$
\end{lemma}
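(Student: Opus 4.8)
The plan is to prove Lemma~\ref{th:ODE_Lp_approximation_C} essentially as a corollary of the main approximation theorem in \cite{Li2022Deep}, by checking that the particular neural ODE field $v(x,t) = A(t)\tanh(W(t)x+b(t))$ falls into the class of control fields for which that theorem guarantees $L^p$-universal approximation of continuous maps $\mathbb{R}^d\to\mathbb{R}^d$ when $d\ge 2$. Concretely, I would first recall the abstract hypothesis used in \cite{Li2022Deep}: the right-hand side of the ODE should be a family of vector fields $\{v(\cdot,t)\}$ drawn from a set $\mathcal{F}$ that is rich enough to generate (via concatenation of flows) a semigroup whose closure contains all orientation-preserving diffeomorphisms in the appropriate topology, or equivalently that $\mathcal{F}$ satisfies the "well-functioned" / controllability conditions stated there. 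So step one is to quote that condition precisely and identify $\mathcal{F} = \{x\mapsto A\tanh(Wx+b) : A,W\in\mathbb{R}^{d\times d},\, b\in\mathbb{R}^d\}$.

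The second step is the verification that this $\mathcal{F}$ meets the hypotheses. The key observations are: (i) $\mathcal{F}$ contains, or can approximate on any compact set, the fields $x\mapsto A\tanh(W x + b)$ which for small $\|W\|$ behave like affine fields $x \mapsto A W x + Ab + O(\|W\|^3)$, giving access to (approximations of) all linear vector fields and hence all invertible linear flows; (ii) by localizing $\tanh$ (choosing $W$ with one large row and zero elsewhere, and $A$ rank one) one gets fields concentrated in a direction and depending on essentially a single coordinate, the standard "one-neuron" building block used to generate flows that push mass along coordinate directions; (iii) since $\tanh$ is a nonpolynomial $C^\infty$ sigmoid, the classical shallow-network density argument shows $\{A\tanh(Wx+b)\}$ is dense (in $C^0$ on compacta) in a class of fields large enough to apply the Brenier--Gangbo-type result \cite{Brenier2003Approximation} that orientation-preserving diffeomorphisms approximate continuous functions in $L^p$. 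Because $d\ge 2$, the flow of an ODE is always orientation-preserving and isotopic to the identity, so there is no parity obstruction; this is exactly why the hypothesis $d\ge 2$ appears, and I would flag that this dimensional restriction is the only place it is used. Then step three is bookkeeping: piecewise-constant-in-$t$ parameters correspond to composing finitely many autonomous flows, and the theorem of \cite{Li2022Deep} is stated for exactly such piecewise-constant controls, so the conclusion $\|f^*-\phi^\tau\|_{L^p(\mathcal{K})}\le\varepsilon$ transfers directly with $\tau$ and the control $(A,W,b)$ produced by that theorem.

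The main obstacle I expect is not any hard new estimate but a matching-of-hypotheses issue: \cite{Li2022Deep} may state its result for a generic activation $\rho$ satisfying certain regularity and non-degeneracy conditions (e.g. $\rho$ continuous, non-polynomial, with $\rho'$ not identically zero), and one must confirm $\tanh$ satisfies all of them, and also confirm that their notion of "approximate a continuous function in $L^p$" is over a compact domain $\mathcal{K}$ with the target being an arbitrary continuous $f^*$ rather than, say, only a diffeomorphism or only a map fixing a base point. If \cite{Li2022Deep} proves the statement only for $\mathcal{K}$ a cube or ball, I would reduce the general compact case by enclosing $\mathcal{K}$ in a cube (as the paper already notes we may do) and using that $L^p$ approximation on the larger set implies it on $\mathcal{K}$. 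A secondary, more technical point is continuity of the flow map in the controls and well-posedness of the ODE: since $\tanh$ is globally Lipschitz and bounded, $v(\cdot,t)$ is globally Lipschitz uniformly in $t$, so the flow $\phi^\tau$ exists globally, is a diffeomorphism, and depends continuously on the piecewise-constant parameters — this is routine but should be stated to make the quotation of \cite{Li2022Deep} rigorous.

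Finally, I would remark that the lemma is labeled a "special case," so the expected proof in the appendix is likely just: "This follows from Theorem~X of \cite{Li2022Deep} applied with activation $\rho=\tanh$, after verifying that $\tanh$ satisfies the hypotheses of that theorem and that any compact $\mathcal{K}$ may be assumed to be a cube." My proposal is therefore to write precisely that, with the hypothesis-checking for $\tanh$ spelled out, and with a one-line explanation of the role of $d\ge 2$ (orientation preservation of flows).
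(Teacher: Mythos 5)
Your proposal takes exactly the route the paper does: the appendix proof of Lemma~\ref{th:ODE_Lp_approximation_C} is the single line ``This is a special case of Theorem 2.3 in \cite{Li2022Deep}.'' Your additional hypothesis-checking for $\tanh$, the role of $d\ge 2$ via orientation preservation and \cite{Brenier2003Approximation}, and the reduction of a general compact $\mathcal{K}$ to a cube are all consistent with how the paper frames this result elsewhere in the text, so the proposal is correct and essentially identical in approach.
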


Next, we consider the approximation of the flow map associated with (\ref{eq:neural_ODE}) by neural networks. Recently, \cite{Duan2022Vanilla} found that leaky-ReLU networks could perform such approximations.

\begin{lemma}[Theorem 2.2 in \cite{Duan2022Vanilla}]\label{th:leaky_ReLU_approximate_phi}
    If the parameters $(A,W,b)$ in (\ref{eq:neural_ODE}) are piecewise constants, then for any compact set $\mathcal{K}$ and any $\varepsilon>0$, there is a leaky-ReLU network $f_L(x)$ with width $d$ and depth $L$ such that
    \begin{align}
        \|\phi^\tau(x) - f_L(x)\| \le \varepsilon, \forall x \in \mathcal{K}.
    \end{align}
\end{lemma}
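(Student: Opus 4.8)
The plan is to reduce the approximation of the flow map $\phi^\tau$ to a finite composition of elementary flow maps, each of which is realized exactly (or to arbitrary accuracy) by a short leaky-ReLU block of width $d$. Since the parameters $(A,W,b)$ are piecewise constant on $[0,\tau]$, we may partition $[0,\tau]$ into finitely many subintervals $[t_{k-1},t_k]$ on each of which $(A,W,b)\equiv(A_k,W_k,b_k)$ is constant. The flow map factors as $\phi^\tau = \psi_K \circ \cdots \circ \psi_1$, where $\psi_k$ is the time-$(t_k-t_{k-1})$ flow of the autonomous field $x \mapsto A_k\tanh(W_kx+b_k)$. Because a composition of leaky-ReLU networks of width $d$ is again a leaky-ReLU network of width $d$ (one simply stacks the layers, absorbing the affine parts), it suffices to approximate each $\psi_k$ on a suitable compact set — after fixing $\varepsilon$, choosing the compact sets inductively (the image of $\mathcal{K}$ under $\psi_1$, then under $\psi_2\circ\psi_1$, etc., each enlarged by a safety margin) and controlling error propagation through the Lipschitz constants of the later maps.

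The core step is thus: approximate, uniformly on a compact set and to any accuracy, the time-$s$ flow of a single autonomous field $v(x)=A\tanh(Wx+b)$ by a width-$d$ leaky-ReLU network. First I would invoke the standard fact that the Euler scheme with small step $h$ approximates this flow uniformly on compacta (the field is smooth with bounded derivatives on the relevant compact set, so the one-step error is $O(h^2)$ and accumulates to $O(h)$ over a fixed time horizon). One Euler step is the map $x \mapsto x + h\,A\tanh(Wx+b)$, which is a residual/ResNet layer. The remaining task is to realize — or approximate — such a residual layer by a width-$d$ leaky-ReLU network. Here I would lean on the observation (the mechanism behind \cite{Duan2022Vanilla}) that a leaky-ReLU layer $x\mapsto A'\sigma_\alpha(W'x+b')+b''$ with $W'$ chosen close to the identity (or to a scaled identity) behaves, to leading order, like $x \mapsto (\text{affine}) + (\text{small nonlinear correction})$; by composing two such layers one can cancel the unwanted affine part and leave a controllable residual nonlinearity. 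Since $\tanh$ itself is a fixed smooth function, a second layer of appropriate leaky-ReLU neurons (again width $d$, using that $d\ge 1$) approximates $h\,A\tanh(W\,\cdot+b)$ uniformly on the compact set to within the prescribed tolerance. Concatenating these blocks over all Euler steps and all subintervals yields the desired width-$d$ leaky-ReLU network.

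The main obstacle is the width constraint: all of this must be done without any auxiliary coordinates, so every intermediate representation lives in $\mathbb{R}^d$, and the leaky-ReLU layer $x\mapsto A'\sigma_\alpha(W'x+b')+b''$ must be used simultaneously to carry the state forward and to synthesize the nonlinearity $h\,A\tanh(Wx+b)$. The delicate points are (i) ensuring the affine ``carrier'' part of each leaky-ReLU layer can be made exactly the identity (which is possible because $\sigma_\alpha$ is affine on each half-line, so on a region where the preactivation keeps a fixed sign the layer is exactly affine — one shifts by a large bias, applies $\sigma_\alpha$, and shifts back), and (ii) bounding the propagation of approximation errors through the composition, which requires a priori Lipschitz bounds on the true flow maps $\psi_k$ and uniform control of the compact sets visited. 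Both are routine given smoothness of $\tanh$ and compactness, so I expect the write-up to be a careful but standard Euler-discretization-plus-error-accumulation argument; the genuinely new content is the width-$d$ realization of a ResNet layer by leaky-ReLU, which the lemma is permitted to cite from \cite{Duan2022Vanilla}.
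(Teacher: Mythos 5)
First, note that the paper does not actually prove this lemma: its ``proof'' is a one-line citation of Theorem 2.2 in \cite{Duan2022Vanilla}, with an informal sketch of the mechanism given in Section~\ref{sec:case_of_N=d}. Your overall architecture (partition $[0,\tau]$ into intervals of constant parameters, discretize in time, realize each elementary step by a width-$d$ leaky-ReLU block, and control error propagation via Lipschitz bounds on compacta) matches that sketch up to the discretization step. The gap is in the core step. You discretize by a plain Euler step $x \mapsto x + h\,A\tanh(Wx+b)$, which updates \emph{all} $d$ coordinates simultaneously, and then assert that such a residual layer can be realized in width $d$ by ``composing two leaky-ReLU layers so as to cancel the unwanted affine part and leave a controllable residual nonlinearity.'' This is exactly the hardest point of the lemma, and the mechanism you offer does not work as stated: on a region where all preactivations keep a fixed sign a leaky-ReLU layer is exactly affine, so stacking such layers only ever produces affine maps; the moment you use the kink to synthesize a nonlinearity, you have no spare coordinates in which to both remember $x$ and compute $h\,A\tanh(Wx+b)$ so that the two can be added. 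Nothing in your write-up explains how a width-$d$ network simultaneously carries the state and produces the full $d$-dimensional nonlinear increment.

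The argument the paper actually relies on avoids this by a different discretization: a Lie--Trotter \emph{splitting} of the field $v(x,t)=\sum_{i,j} A_{ji}\tanh(W_{i,:}x+b_i)\,e_j$ into scalar components, so that each elementary map $T_k^{(i,j)}$ changes only the single coordinate $x^{(j)}$, via $y^{(j)} = x^{(j)} + a\Delta t\,\tanh(wx+\beta)$, and is monotone in $x^{(j)}$ for $\Delta t$ small. A single-coordinate monotone update can be written as (invertible affine map) $\circ$ (monotone scalar function applied to one coordinate) $\circ$ (invertible affine map), and monotone continuous scalar functions are exactly what width-one leaky-ReLU compositions approximate uniformly (the fact from \cite{Duan2022Vanilla} already used in Section~\ref{sec:case_of_d=1}); the remaining $d-1$ coordinates pass through affinely. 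That is what makes the width-$d$ constraint tractable. To repair your proposal you would either need to replace the full Euler step by this coordinate-wise splitting, or supply a genuinely new argument for realizing a full ResNet step in width $d$ --- which you currently do not have.
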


Combining these two lemmas, one can directly prove the following corollary, which is a part of our Theorem \ref{th:main_LpUAP_leaky_ReLU}.
\begin{corollary}\label{th:main_LpUAP_leaky_ReLU_d}
    Let $\mathcal{K} \subset \mathbb{R}^{d}$ be a compact set and $d\ge 2$; then, for the function class $L^p(\mathcal{K},\mathbb{R}^{d})$, the leaky-ReLU networks with width $d$ have $L^p$-UAP.
\end{corollary}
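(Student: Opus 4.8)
The plan is to chain Lemma \ref{th:ODE_Lp_approximation_C} and Lemma \ref{th:leaky_ReLU_approximate_phi} together with a triangle inequality on the $L^p(\mathcal{K})$ norm. Fix a target $f^* \in L^p(\mathcal{K},\mathbb{R}^d)$ and $\varepsilon > 0$. Since $\mathcal{K}$ is compact and continuous functions are dense in $L^p(\mathcal{K},\mathbb{R}^d)$, first approximate $f^*$ by a continuous $g \in C(\mathcal{K},\mathbb{R}^d)$ with $\|f^* - g\|_{L^p(\mathcal{K})} < \varepsilon/3$. (If one prefers, extend $g$ to a continuous function on all of $\mathbb{R}^d$ via Tietze; this matches the hypotheses of Lemma \ref{th:ODE_Lp_approximation_C}, which asks for a continuous $f^*:\mathbb{R}^d \to \mathbb{R}^d$.)

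Next I would invoke Lemma \ref{th:ODE_Lp_approximation_C} with $d \ge 2$: there exist $\tau > 0$ and a piecewise-constant parameter path $(A,W,b):[0,\tau] \to \mathbb{R}^{d\times d}\times\mathbb{R}^{d\times d}\times\mathbb{R}^d$ whose associated flow map $\phi^\tau$ of the neural ODE \eqref{eq:neural_ODE} satisfies $\|g - \phi^\tau\|_{L^p(\mathcal{K})} < \varepsilon/3$. Then apply Lemma \ref{th:leaky_ReLU_approximate_phi} to this same piecewise-constant $(A,W,b)$: for the compact set $\mathcal{K}$ there is a leaky-ReLU network $f_L$ of width $d$ with $\|\phi^\tau(x) - f_L(x)\| \le \delta$ for all $x \in \mathcal{K}$. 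Choosing $\delta$ small enough (e.g. $\delta \le \varepsilon/(3\,\mathrm{vol}(\mathcal{K})^{1/p})$, using that $\mathcal{K}$ has finite Lebesgue measure) converts the uniform bound into $\|\phi^\tau - f_L\|_{L^p(\mathcal{K})} < \varepsilon/3$. The triangle inequality then gives $\|f^* - f_L\|_{L^p(\mathcal{K})} < \varepsilon$, which is exactly the claimed $L^p$-UAP for width-$d$ leaky-ReLU networks.

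The main point requiring care — rather than a genuine obstacle — is matching the hypotheses of the two black-box lemmas: Lemma \ref{th:ODE_Lp_approximation_C} produces a \emph{piecewise constant} control on $[0,\tau]$, and Lemma \ref{th:leaky_ReLU_approximate_phi} requires precisely that, so the composition is clean; one only needs to confirm that the flow map from the first lemma is the same object fed into the second. A secondary routine step is the conversion from a sup-norm bound on $\mathcal{K}$ to an $L^p(\mathcal{K})$ bound, which is immediate because $\mathcal{K}$ is bounded and hence of finite measure. No lower-bound argument is needed here, since Corollary \ref{th:main_LpUAP_leaky_ReLU_d} asserts only the UAP (the matching lower bound $w_{\min} \ge \max(d_x,d_y)$ for $d \ge 2$ comes from Lemma \ref{th:universal_w_min} and is used separately to conclude Theorem \ref{th:main_LpUAP_leaky_ReLU}).
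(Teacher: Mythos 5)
Your proposal is correct and follows essentially the same route as the paper's own proof: approximate by the neural-ODE flow map via Lemma \ref{th:ODE_Lp_approximation_C}, approximate that flow map by a width-$d$ leaky-ReLU network via Lemma \ref{th:leaky_ReLU_approximate_phi}, and combine with the triangle inequality. Your additional care in first passing from an $L^p$ target to a continuous one (and in converting the sup-norm bound to an $L^p$ bound using the finite measure of $\mathcal{K}$) only makes explicit two steps the paper leaves implicit.
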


Here, we summarize the main ideas of this result. Let us start with the discretization of the ODE by the splitting approach \change{(see \cite{McLachlan2002Splitting} for example). Consider the spliting of (\ref{eq:neural_ODE}) with $v(x,t) = \sum_{i,j} v^{(j)}_{i}(x,t) e_j$, where $v^{(j)}_{i}(x,t) = A_{ji}(t) \tanh(W_{i,:}(t)x+b_i(t))$ is a scalar function and $e_j$ is the $j$-th axis unit vector.
Then for a given time step $\Delta t = \tau/K$, ($K$ large enough), the splitting method gives the following iteration of $x_k$ which approximates $\phi^{k\Delta t}(x_0)$,
\begin{align}
    x_{k+1} = T_k^{(d,d)} \circ \dots \circ T_k^{(1,2)} \circ T_k^{(1,1)} x_k,
\end{align}
where the map $T_k^{(i,j)}: x \to y$ is defined as 
\begin{align}
    \left\{
    \begin{aligned} 
    & y^{(l)} = x^{(l)} , l \neq j,  \\
    & y^{(j)} = x^{(j)} + \Delta t v^{(j)}_{i}(x,k\Delta t)
    =
    x^{(j)} + a \Delta t  \tanh(w x + \beta).
    \end{aligned}
    \right.
\end{align}
Here the superscript in $x^{(l)}$ means the $l$-th coordinate of $x$. $a=A_{ji},w=W_{i,:}$ and $\beta=b_i$ take their value at $t=k\Delta t$. Note that the scalar functions $\tanh(\xi) $ and $\xi + a\Delta t \tanh(\xi)$ are monotone with respect to $\xi$ when $\Delta t$ is small enough. This allows us to construct leaky-ReLU networks with width $d$ to approximate each map $T_k^{(i,j)}$ and then approximate the flow-map, $\phi^\tau(x_0) \approx x_K$.
    }

Note that Lemma \ref{th:leaky_ReLU_approximate_phi} holds for all dimensions, while Lemma \ref{th:ODE_Lp_approximation_C} holds for dimensions larger than one. This is because flow maps are orientation-preserving diffeomorphisms, and they can approximate continuous functions only for dimensions larger than one; see \cite{Brenier2003Approximation}. The approximation is based on control theory where the flow map can be adjusted to match any finite set of input-output pairs. This match does not hold for dimension one. However, the case of dimension one is discussed in the last section.

\section{Achieving the minimal width}
\label{sec:minimal_width}

Now, we turn to the cases where the input and output dimensions cannot be equal.

\subsection{Universal lower bound $w^*_{\min}=\max(d_x,d_y)$}

Here, we give a sketch of the proof of Lemma \ref{th:universal_w_min}, which states that $w^*_{\min}$ is a universal lower bound over all activation functions. Parts of Lemma \ref{th:universal_w_min} have been demonstrated in many papers, such as \cite{Park2021Minimum}. Here, we give proof by two counterexamples that are simple and easy to understand from the topological perspective. It contains two cases: 1) there is a function $f^*$ that cannot be approximated by networks with width $w \le d_x-1$; 2) there is a function $f^*$ that cannot be approximated by networks with width $w \le d_y -1$. Figure~\ref{fig:example_for_TH1}(a)-(b) shows the counterexamples that illustrate the essence of the proof.

For the first case, $w \le d_x-1$, we show that $f^*(x)=\|x\|^2, x \in \mathcal{K} = [-2,2]^{d_x},$ is what we want; see Figure~\ref{fig:example_for_TH1}(a). In fact, we can relax the networks to a function $f(x) = \phi(Wx+b)$, where $Wx+b$ is a transformer from $\mathbb{R}^{d_x}$ to $\mathbb{R}^{d_x-1}$ and $\phi(x)$ could be any function. A consequence is that there exists a direction $v$ (set as the vector satisfying $Wv=0$, $\|v\|=1$) such that $f(x) = f(x+\lambda v)$ for all $\lambda \in \mathbb{R}$. Then, considering the sets $A=\{x : \|x\|\le 0.1\}$ and $B=\{x : \|x-v\|\le 0.1\}$, we have
\begin{align*}
    \int_\mathcal{K} |f(x) - f^*(x)| dx
    &\ge
    \int_A |f(x) - f^*(x)| dx + \int_B |f(x) - f^*(x)| dx \\
    &\ge 
    \int_A (|f(x) - f^*(x)| +|f(x+v) - f^*(x+v)|) dx\\
    &\ge 
    \int_A (|f^*(x) - f^*(x+v)|) dx
    \ge 0.8 |A|.
\end{align*}
Since the volume of $A$ is a fixed positive number, the inequality implies that even the $L^1$ approximation for $f^*$ is impossible. The case of the $L^p$ norm and the uniform norm is impossible as well.

For the second case, $w \le d_y -1$, we show the example of $f^*$, which is the parametrized curve from $\textbf{0}$ to $\textbf{1}$ along the edge of the cubic, see Figure~\ref{fig:example_for_TH1}(b). Relaxing the networks to a function $f(x) = W \psi(x) + b$, $\psi(x)$ could be any function. Since the range of $f$ is in a hyperplane while $f^*$ has a positive distance to any hyperplane, the target $f^*$ cannot be approximated.

\subsection{Achieving $w^*_{\min}$ for $L^p$-UAP}

Now, we show that the lower bound $w_{\min}^*$ for $L^p$-UAP can be achieved by leaky-ReLU+ABS networks.
Without loss of generality, we consider $\mathcal{K}=[0,1]^{d_x}$.

For any function $f^*$ in $L^p([0,1]^{d_x},\mathbb{R}^{d_y})$, we can extend it to a function $\tilde f^*$ in $L^p([0,1]^{d},\mathbb{R}^{d})$ by filling in zeros where $d=\max(d_x,d_y)=w^*_{\min}$. When $d_x>1$ or $d_y>1$, the $L^p$-UAP for leaky-ReLU networks with width $w^*_{\min}$ is obtained by using Corollary \ref{th:main_LpUAP_leaky_ReLU_d}. Recall that by the Lemma \ref{th:universal_w_min}, $w_{\min}^*$ is optimal, and we obtain our main result Theorem \ref{th:main_LpUAP_leaky_ReLU}.

Combining the case of $d_x=d_y=d=1$ in Section \ref{sec:case_of_d=1}, adding absolute function ABS as an additional activation function, we obtain Theorem \ref{th:main_LpUAP_leaky_ReLU+ABS}.

\subsection{Achieving $w^*_{\min}$ for $C$-UAP}

Here, we use the encoder-memorizer-decoder approach proposed in \cite{Park2021Minimum} to achieve the minimum width. Without loss of generality, we consider the function class $C([0,1]^{d_x},[0,1]^{d_y})$. \change{The encoder-memorizer-decoder approach} includes three parts:
\begin{itemize}
    \item[1)] an \change{\bf encoder} maps $[0,1]^{d_x}$ to $[0,1]$ which quantizes each coordinate of $x$ by a $K$-bit binary representation and concatenates the quantized coordinates into a single scalar value $\bar x$ having a $(d_x K)$-bit binary representation;
    \item[2)] a \change{\bf memorizer} maps each codeword $\bar x$ to its target codeword $\bar y$;
    \item[3)] a \change{\bf decoder} maps $\bar y$ to the quantized target that approximates the true target.
\end{itemize}
As illustrated in Figure \ref{fig:example_for_TH1}(c), using the floor function instead of a step function, one can construct the encoder by FLOOR networks with width $d_x$ and the decoder by FLOOR networks with width $d_y$. The memorizer is a one-dimensional scalar function that can be approximated by ReLU networks with a width of two or UOE networks with a width of one. Therefore, the minimal widths $\max(d_x,2,d_y)$ and $\max(d_x,d_y)$ are obtained, which demonstrate Lemma \ref{th:C-UAP_ReLU+FLOOR} and Corollary \ref{th:C-UAP_ReLU+FLOOR+UOE}, respectively.

\begin{figure}[htp!]
    \center
    \includegraphics[width=14cm]{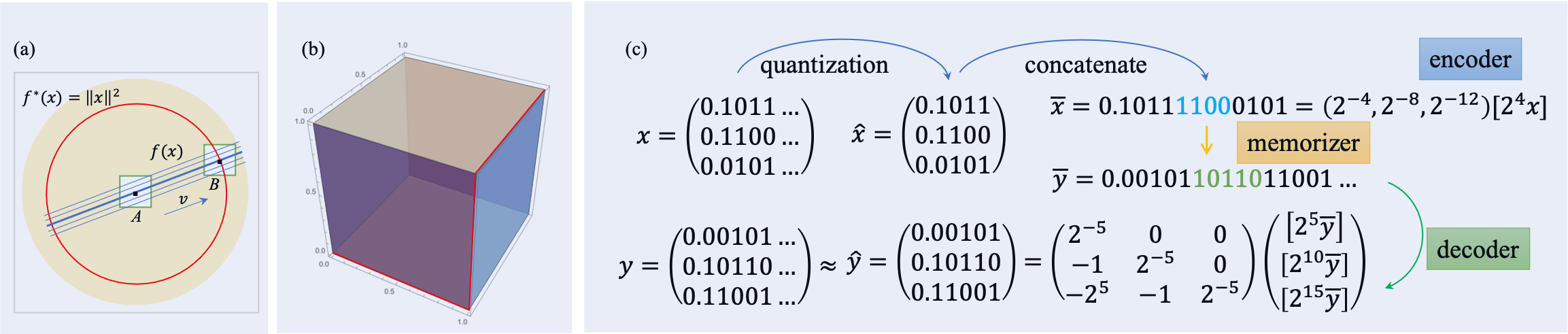}
    \caption{ (a)(b) Counterexamples for proving Lemma \ref{th:universal_w_min}. (a) Points $A$ and $B$ on a level set of networks $f(x)$; $f(A)=f(B)$ but $f^*(A)-f^*(B)$ is not small. (b) The curve from $\textbf{0}$ to $\textbf{1}$ along the edge of the cubic has a positive distance to any hyperplane. (c) illustration of the \change{encoder-memorizer-decoder} scheme for $C$-UAP by an example where $d_x=d_y=3$, 4 bits for the input and 5 bits for the output.}
    \label{fig:example_for_TH1}
\end{figure}

\subsection{Effect of the activation functions}

Here, we emphasize that our universal bound of the minimal width is optimized over arbitrary activation functions. However, it cannot always be achieved when the activation functions are fixed. Here, we discuss the case of monotone activation functions.

If the activation functions are strictly monotone and continuous (such as leaky-ReLU), a width of at least $d_x+1$ is needed for $C$-UAP. This can be understood through topology theory. Leaky-ReLU, the nonsingular linear transformer, and its inverse are continuous and homeomorphic. Since compositions of homeomorphisms are also homeomorphisms, we have the following proposition:
    If $N=d_x=d_y=d$ and the weight matrix in leaky-ReLU networks are nonsingular, then the input-output map is a homeomorphism.
Note that singular matrices can be approximated by nonsingular matrices; therefore, we can restrict the weight matrix in neural networks to the nonsingular case.

When $d_x \ge d_y$, we can reformulate the leaky-ReLU network as $f_L(x) = W_{L+1} \psi(x) + b_{L+1}$, where $\psi(x)$ is the homeomorphism. Note that considering the case where $d_y=1$ is sufficient, according to \cite{Hanin2018Approximating, Johnson2019Deep}. They proved that the neural network width $d_x$ cannot approximate any scalar function with a level set containing a bounded path component. This can be easily understood from the perspective of topology theory. An example is to consider the function $f^*(x) = \|x\|^2, x \in \mathcal{K} = [-2,2]^{d_x}$ shown in Figure \ref{fig:UAP_demo_circ}.

\begin{figure}[htp!]
    \center
    \includegraphics[width=13cm]{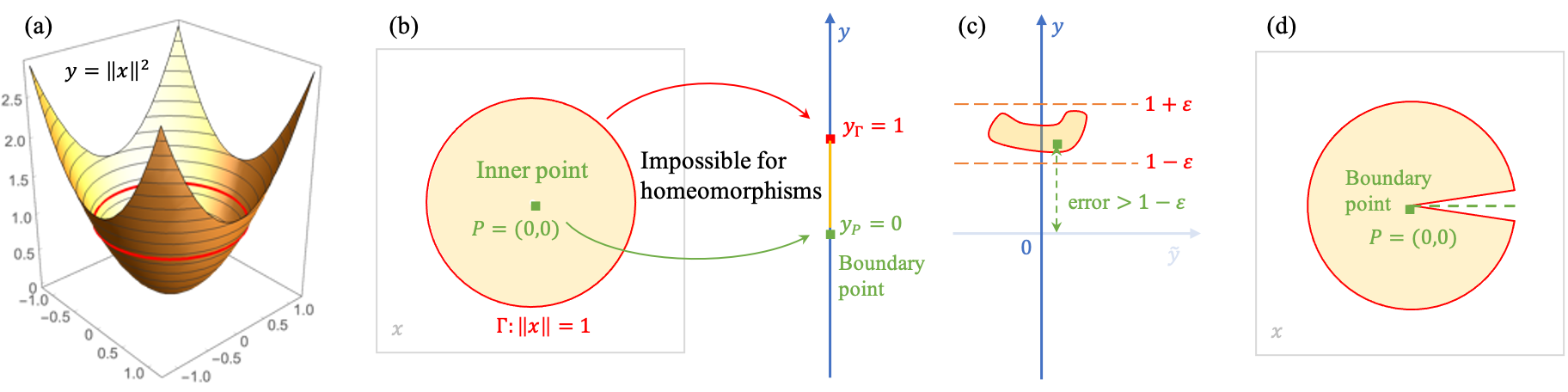}
    \caption{Illustrating the possibility of UAP when $N=d_x$. (a) Plot of $f^*(x) = \|x\|^2$ and its contour at $\|x\|=1$. (b) The original point $P$ is an inner point of the unit ball, while its image is a boundary point, which is impossible for homeomorphisms. (c) Any homeomorphism, approximating $\|x\|^2$ with error less than $\varepsilon$ (=0.1 for example) on $\Gamma$, should have error larger than $1-\varepsilon$ (=0.9) at $P$. (d) Approximating $f^*$ in $L^p$ is possible by leaving a small region. }
    \label{fig:UAP_demo_circ}
\end{figure}

The case where $d_x < d_y$. We present a simple example in Figure~\ref{fig:UAP_demo_curve}. The curve `4' corresponding to a continuous function from $[0,1]\subset \mathbb{R}$ to $\mathbb{R}^2$ cannot be uniformly approximated. However, the $L^p$ approximation is still possible.

\begin{figure}[htp!]
    \center
    \includegraphics[width=10cm]{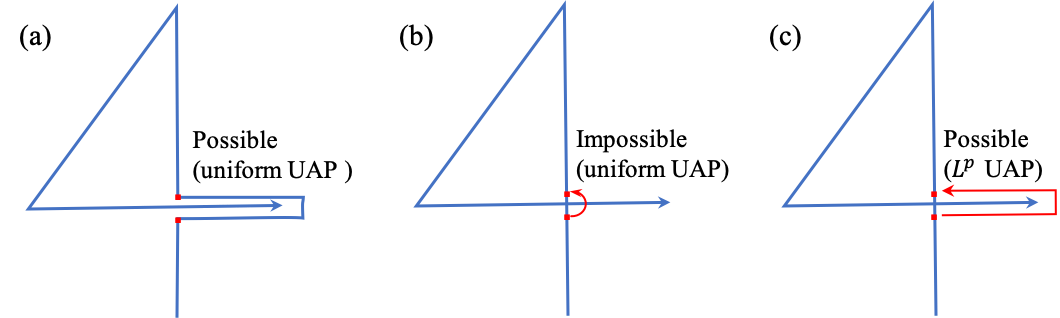}
    \caption{Illustrating the possibility of $C$-UAP when $d_x \le d_y$. The curve in (a) is homeomorphic to the interval $[0,1]$, while the curve `4' in (b) is not and cannot be approximated uniformly by homeomorphisms. The $L^p$ approximation is possible via (a). }
    \label{fig:UAP_demo_curve}
\end{figure}

\section{Conclusion}
\label{sec:conclusion}

Let us summarize the main results and implications of this paper. After giving the universal lower bound of the minimum width for the UAP, we proved that the bound is optimal by constructing neural networks with some activation functions.

For the $L^p$-UAP, our construction to achieve the critical width was based on the approximation power of neural ODEs, which bridges the feed-forward networks to the flow maps corresponding to the ODEs. This allowed us to understand the UAP of the FNN through topology theory. Moreover, we obtained not only the lower bound but also the upper bound.

For the $C$-UAP, our construction was based on the encoder-memorizer-decoder approach in \cite{Park2021Minimum}, where the activation sets contain a discontinuous function $\floor{x}$. It is still an open question whether we can achieve the critical width by continuous activation functions. \cite{Johnson2019Deep} proved that continuous and monotone activation functions need at least width $d_x+1$. This implies that nonmonotone activation functions are needed. By using the UOE activation, we calculated the critical width for the case of $d_x=1$. It would be of interest to study the case of $d_x \ge 2$ in future research.

We remark that our UAP is for functions on a compact domain. 
Examining the critical width of the UAP for functions on unbounded domains is desirable for future research.



\subsubsection*{Acknowledgments}

We thank anonymous reviewers for their valuable comments and useful suggestions.
This research is supported by the National Natural Science Foundation of China (Grant No. 12201053).

\bibliography{tex/refs.bib}
\bibliographystyle{iclr2023_conference}

\newpage
\appendix

\section{Proof of the lemmas}

\subsection{Proof of Lemma \ref{th:1D_UAP}}

We give a definition and a lemma below that are useful for proving Lemma \ref{th:1D_UAP}.

\begin{definition}\label{def:appendix}

    We say two functions, $f_1,f_2 \in C(\mathbb{R},\mathbb{R})$, have the same ordering of extrema if they have the following properties:

    \begin{itemize}
        \item[1)] $f_i(x)$ has only a finite number of extrema that are (increasing) $x^*_{i,j}, j=1,2,...,m_i.$
        \item[2)] $m_1=m_2=:m$ and the two sequences, 
        $$S_1:=\{f_1(-\infty), f_1(x^*_{1,1}),..., f_1(x^*_{1,m}),f_1(+\infty)\},$$ 
        and 
        $$S_2:=\{f_2(-\infty), f_2(x^*_{2,1}),..., f_2(x^*_{2,m}),f_2(+\infty)\},$$
        have the same ordering, \emph{i.e.}, 
        \begin{align*}
            S_{1,i} < S_{1,j} \Longleftrightarrow S_{2,i} < S_{2,j}, \quad \forall i,j,\\
            S_{1,i} = S_{1,j} \Longleftrightarrow S_{2,i} = S_{2,j}, \quad \forall i,j.
        \end{align*}
    \end{itemize}
\end{definition}

\begin{lemma}\label{lemma:appendix}
    Let $f_1$ and $f_2$ be continuous functions in $C(\mathbb{R},\mathbb{R})$ that have the same ordering of extrema; then, there are two strictly monotone functions, $v$ and $u$, such that
    \begin{align*}
        f_1 = v \circ f_2 \circ u.
    \end{align*}
\end{lemma}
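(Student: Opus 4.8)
The plan is to reconstruct the two monotone functions $u$ and $v$ explicitly by working interval-by-interval between consecutive extrema. Write the extrema of $f_2$ as $-\infty = x^*_{2,0} < x^*_{2,1} < \dots < x^*_{2,m} < x^*_{2,m+1} = +\infty$ (with the obvious conventions at the endpoints) and similarly for $f_1$. Since $f_1$ and $f_2$ have the same ordering of extrema, $m$ is the same for both, and on each interval $I_j = [x^*_{2,j}, x^*_{2,j+1}]$ the function $f_2$ is strictly monotone (increasing or decreasing), and $f_1$ is monotone in the \emph{same} direction on the corresponding interval $J_j = [x^*_{1,j}, x^*_{1,j+1}]$, because the direction of monotonicity is determined by the order relation between the values at the two endpoints, which by hypothesis is preserved.

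First I would build $u$ as a strictly increasing homeomorphism of $\mathbb{R}$ that maps each $J_j$ onto $I_j$; since both collections $\{J_j\}$ and $\{I_j\}$ partition $\mathbb{R}$ into the same number of intervals in the same order, one can take $u$ to be affine on each bounded piece and an affine/linear map near $\pm\infty$, and the pieces glue continuously at the breakpoints $x^*_{1,j} \mapsto x^*_{2,j}$. Then $f_2 \circ u$ is a continuous function whose extrema are exactly at the points $x^*_{1,j}$ (the breakpoints), with $(f_2\circ u)(x^*_{1,j}) = f_2(x^*_{2,j})$, and which is monotone on each $J_j$ in the same direction as $f_1$. So $f_1$ and $g := f_2 \circ u$ are two continuous functions, monotone on the same intervals $J_j$ in the same direction, with value-sequences $S_1$ and $S_2$ having the same ordering. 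Now I would construct $v$: on each $J_j$, both $f_1$ and $g$ are strictly monotone, so $f_1|_{J_j} \circ (g|_{J_j})^{-1}$ is a well-defined strictly increasing continuous bijection from $g(J_j)$ onto $f_1(J_j)$. The task is to patch these local maps into a single strictly monotone $v$ defined on $g(\mathbb{R}) = \bigcup_j g(J_j)$ (or all of $\mathbb{R}$) so that $v \circ g = f_1$.

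The hard part will be showing that these local definitions of $v$ are mutually consistent where the images $g(J_j)$ overlap — this is exactly where the "same ordering" hypothesis does the essential work. The ranges $g(J_j) = [\min, \max]$ of consecutive (and non-consecutive) intervals can overlap in complicated ways, and for $v$ to be well-defined and strictly monotone we need: whenever $g(x) = g(x')$ for $x \in J_j$, $x' \in J_k$, we also have $f_1(x) = f_1(x')$; and whenever $g(x) < g(x')$ we have $f_1(x) < f_1(x')$. I would prove this by induction on the number of extrema, or directly by a careful case analysis: given $g(x) \le g(x')$, express both values in terms of the endpoint values $S_{2,\cdot}$ (using that $g$ is monotone between extrema and the intermediate value theorem), derive the corresponding inequality among the $S_{1,\cdot}$ from the order-isomorphism of $S_1$ and $S_2$, and conclude the same inequality for $f_1(x)$ versus $f_1(x')$. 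This consistency gives a well-defined order-preserving map $v$ on $g(\mathbb{R})$; I would then extend it to a strictly monotone continuous function on all of $\mathbb{R}$ (e.g. by linear interpolation across any gaps and linear continuation beyond the ends), which is harmless since only the values of $v$ on $g(\mathbb{R})$ enter the identity $f_1 = v \circ f_2 \circ u$. Assembling $v$, $u$ and checking $f_1 = v\circ f_2 \circ u$ pointwise on each $J_j$ then finishes the proof.
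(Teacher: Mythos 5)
Your overall strategy---work on the intervals between consecutive extrema, where everything is strictly monotone, and patch local inverses together---is in the right spirit, but the order in which you build $u$ and $v$ creates a gap that cannot be closed. You first fix $u$ as an essentially arbitrary increasing homeomorphism matching the monotonicity intervals (piecewise affine, determined only by the breakpoints), set $g=f_2\circ u$, and then try to define $v$ locally by $f_1|_{J_j}\circ(g|_{J_j})^{-1}$, hoping the pieces agree wherever the ranges $g(J_j)$ overlap. They do not, in general. The hypothesis only constrains the ordering of the values \emph{at the extrema}; it says nothing about how $f_1$ and $g$ interpolate between those values, and that is exactly what your consistency conditions ($g(x)=g(x')\Rightarrow f_1(x)=f_1(x')$, and the analogous one for $<$) depend on. Concretely: let $f_2$ be piecewise linear through $(0,0),(1,2),(2,1)$ and $f_1$ piecewise linear through $(0,0),(1,4),(2,1)$, both decreasing to $-\infty$ on the left and increasing to $+\infty$ after $t=2$. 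These have the same ordering of extrema, the breakpoints coincide, so your affine $u$ is the identity and $g=f_2$. Then $g(0.75)=g(1.5)=1.5$ but $f_1(0.75)=3\neq 2.5=f_1(1.5)$; moreover $g(0.7)=1.4<1.5=g(1.5)$ while $f_1(0.7)=2.8>2.5=f_1(1.5)$. So no strictly monotone $v$ with $v\circ g=f_1$ exists for this $u$, and the proposed ``derive the inequality among interior values from the order-isomorphism of $S_1$ and $S_2$'' step cannot succeed: the inner map must depend on the actual profiles of $f_1$ and $f_2$, not merely on the combinatorics of the intervals.

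The paper's proof avoids this by building the maps in the opposite order. First construct $v$: it only has to be a strictly increasing continuous function satisfying the finitely many constraints $v(S_{2,i})=S_{1,i}$, which is possible precisely because $S_1$ and $S_2$ are order-isomorphic (piecewise-linear interpolation suffices); this is the only place the ordering hypothesis is used, and it is exactly enough there. Then set $g=v\circ f_2$; now $g$ and $f_1$ take equal values at corresponding extrema and are strictly monotone in the same direction on corresponding intervals, so on each interval $g$ and $f_1$ have the \emph{same} range and $u:=(g|_{J_i})^{-1}\circ f_1|_{I_i}$ is well defined, continuous, increasing, and glues at the endpoints because of the matched extremal values. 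In that order there is no consistency condition left to verify: $v$ is globally monotone by construction and $u$ is determined rather than chosen. To salvage your ``$u$ first'' narrative you would have to pick $u$ so that $f_2\circ u$ and $f_1$ differ by a single global monotone change of the output variable---which amounts to already knowing $v$. I recommend restructuring the argument accordingly.
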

\begin{proof}
    Here, we use the same notation in Definition \ref{def:appendix}. The functions $v$ and $u$ can be constructed as follows.

    (1) Construct the outer function $v$ that tries to match the function values at the extrema. The only requirement is that
    $$S_{1,i} = v(S_{2,i}), \quad \forall i.$$
    Since $S_1$ and $S_2$ have the same ordering, it is easy to construct such a function $v$ that is continuous and strictly increasing, for example, piecewise linear.
        
(2) Construct the inner function $u$ to match the location of the extrema. Denote $g = v \circ f_2$, which satisfies $f_1(x^*_{1,i}) = g(x^*_{2,i})$. Since $f_1$ and $g$ are strictly monotone and continuous on the intervals $I_i:=(x^*_{1,i}, x^*_{1,i+1})$ and $J_i=(x^*_{2,i}, x^*_{2,i+1})$, respectively, we can construct the function $u$ on $I_i$ as
    \begin{align*}
        u(x) = g^{-1} (f_1(x)), x \in I_i.
    \end{align*}
    Combining each piece of $u$, we have a strictly increasing and continuous function $u$ on the whole space $\mathbb{R}$. As a consequence, we have $f_1 = g \circ u = v \circ f_2 \circ u$.

\end{proof}

\setcounter{theorem}{7}

\begin{lemma}
    For any function $f^*(x) \in C[0,1]$ and $\varepsilon>0$,

    1)
    there is a leaky-ReLU+ABS (or leaky-ReLU+SIN) network with width one and depth $L$ such that
$
\int_0^1 |f^*(x)-f_L(x)|^p dx < \varepsilon^p.
$
    
    2) 
    there is a leaky-ReLU + UOE network width one and depth $L$ such that
    $
|f^*(x)-f_L(x)| < \varepsilon, \forall x\in [0,1].
$
\end{lemma}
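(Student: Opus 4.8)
The plan is to prove the two statements of Lemma~\ref{th:1D_UAP} by first reducing the target $f^*$ to a polynomial and then realizing that polynomial (or an $\varepsilon$-approximation of it) as a composition of a monotone map and a fixed nonmonotone activation. \textbf{Step 1 (polynomial reduction).} By the Weierstrass approximation theorem, pick a polynomial $p(x)$ with $\|f^*-p\|_{C[0,1]} < \varepsilon/2$; it suffices to approximate $p$. The polynomial $p$ has finitely many critical points $0 \le x^*_1 < \dots < x^*_m \le 1$, hence finitely many monotone pieces, and a finite list of extreme values $S_p = \{p(0), p(x^*_1),\dots,p(x^*_m),p(1)\}$ (adjoining endpoint values as in Definition~\ref{def:appendix}).

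\textbf{Step 2 (the $C$-UAP case, part 2).} Using the UOE property of $\rho$, I would locate an interval $[a,b]$ on which the restriction of $\rho$ has exactly $m$ extrema whose ordering matches that of $S_p$; after an affine change of the input variable this gives a function $\tilde\rho$ with the same ordering of extrema as $p$. By Lemma~\ref{lemma:appendix}, $p = v \circ \tilde\rho \circ u$ for strictly monotone continuous $v,u$. Now $\tilde\rho$ is a single UOE neuron (precomposed with an affine map, which the network's weights supply), and $u, v$ are monotone functions in $C([0,1],\mathbb{R})$ (after restricting/extending domains), so by the cited result of \cite{Duan2022Vanilla} each is uniformly approximable by a width-one leaky-ReLU network — and since $\rho$ represents $\sigma_{1/4}$ on any finite interval, equivalently by a width-one UOE network. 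Composing these width-one blocks (which keeps width one), controlling the propagation of errors through the uniform continuity of the outer maps on a compact interval containing all intermediate ranges, yields a width-one leaky-ReLU+UOE network within $\varepsilon$ of $p$, hence within $\varepsilon + \varepsilon/2$ of $f^*$; rescaling $\varepsilon$ finishes part~2.

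\textbf{Step 3 (the $L^p$-UAP case, part 1).} Here exact extrema-matching is unnecessary, so I would take $g$ to be a fixed sawtooth (triangle) wave — representable to arbitrary uniform accuracy on any bounded interval by a width-one ABS network via iterated tent-map-type compositions — or the function $\sin$ (a single SIN neuron). On each monotone piece $I_i$ of $p$, the wave $g$ is monotone on a suitable sub-branch, so $g^{-1}\circ p$ is well-defined and monotone there; I would define $u$ piecewise by $u = g^{-1}\circ p$ on each $I_i$, obtaining a piecewise-monotone (globally increasing after vertical shifts chosen to land on successive monotone branches of $g$) function $u$. Then $g \circ u = p$ exactly except on the finitely many tiny intervals around the breakpoints of $u$, where the mismatch is bounded but of bounded measure; choosing those intervals of total length $< \delta$ makes $\|g\circ u - p\|_{L^p(0,1)}$ as small as desired. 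Approximating the monotone pieces of $u$ by a width-one leaky-ReLU network (gluing with controlled overlap) and $g$ by a width-one ABS (or SIN) network gives the claimed $L^p$ bound.

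\textbf{Main obstacle.} The delicate part is the bookkeeping in Step~2 and Step~3 for \emph{composing} width-one approximators without inflating the width and without letting errors blow up: the outer function $v$ (resp. $g$) must be Lipschitz or at least uniformly continuous on a compact interval that provably contains the range of the inner approximator, which forces one to first fix the inner network, read off its (compact) range, and only then build and approximate the outer map — an ordering-of-quantifiers issue. A secondary subtlety is constructing $u$ in the $L^p$ case as a genuine (continuous, or at least piecewise-continuous with small bad set) monotone function by vertically stacking the branches of the sawtooth, and checking that a width-one leaky-ReLU network can approximate such a piecewise-monotone map in $L^p$ despite the jumps. Neither difficulty is conceptual; both are handled by the uniform-continuity-on-compacts estimates together with the monotone-approximation result of \cite{Duan2022Vanilla}.
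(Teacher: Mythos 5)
Your proposal is correct and follows essentially the same route as the paper: Weierstrass reduction to a polynomial, the decomposition $p = v \circ \rho \circ u$ (resp.\ $g\circ u$ with $g$ a sawtooth/sine) obtained from the same-ordering-of-extrema lemma, and uniform approximation of the monotone pieces $u,v$ by width-one leaky-ReLU/UOE networks with the standard inner-first error propagation. The only (cosmetic) divergence is in part 1, where the paper first replaces $f^*$ by an $L^p$-close function whose local maxima and minima all share common values so that the sawtooth composition is exact and the approximation uniform, whereas you keep $p$ and absorb the mismatch into small-measure intervals around the breakpoints of $u$ --- the same small-set sacrifice, made at a different point in the argument.
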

\begin{proof}
    We mainly provide proof of the second point, while the first point can be proven using the same scheme.

    For any function $f^*(x) \in C([0,1],\mathbb{R})$ and $\varepsilon>0$, we can approximate it by a polynomial $p_n(x)$ with order $n$ such that
    \begin{align*}
        |f^*(x) - p_n(x)| \le \varepsilon/2, \quad \forall x \in [0,1],
    \end{align*}
    according to the well-known Weierstrass approximation theorem. Without a loss of generality, we can assume that $p_n(x)$ is not the same at all of its extrema. Then, we can represent $p_n(x)$ by the following composition, using Lemma \ref{lemma:appendix} and the property of UOE:
    \begin{align} \label{eq:p=vgu}
        p_n(x) = v \circ \rho \circ u(x),
    \end{align}
    where $\rho(x)$ is the UOE function (\ref{eq:UOE}) and $v(x)$ and $u(x)$ are monotonically increasing continuous functions.

    Then, we can approximate $p_n(x)$ by UOE networks. Since $v(x)$ and $u(x)$ are monotone, there are UOE networks $\tilde v(x)$ and $\tilde u(x)$ such that $\|v - \tilde v\|$ and $\|u - \tilde u\|$ are arbitrarily small. Hence, there is a UOE network $f_L(x) = \tilde v \circ \rho \circ \tilde u(x)$ that can approximate $p_n(x)$ such that
    \begin{align*}
        |p_n(x) - f_L(x)| \le \varepsilon/2, \quad \forall x \in [0,1],
    \end{align*}
    which implies that
    \begin{align*}
        |f^*(x) - f_L(x)| \le \varepsilon.
    \end{align*}
    This completes the proof of the second point.

    For the first point, we only emphasize that it is easy to construct a function $f(x)$ that has the same local maximum and local minimum in the interval and has $\|f-f^*\|_{L^p}$ small enough. This $f(x)$ has the same ordering of extrema as the sawtooth function (or sine) and hence can be uniformly approximated by leaky-ReLU+ABS (or leaky-ReLU+SIN) networks $f_L$. As a consequence, $\|f_L-f^*\|_{L^p}$ is small enough.
\end{proof}

\subsection{Proof of Lemma \ref{th:ODE_Lp_approximation_C}}

\begin{lemma}
    Let $d \ge 2$. Then, for any continuous function $f^*:\mathbb{R}^d \rightarrow \mathbb{R}^d$,
    any compact set $\mathcal{K} \subset \mathbb{R}^d$,
    and any $\varepsilon >0$,
    there exist a time $\tau \in \mathbb{R}^+$ and a piecewise constant input $(A,W,b):[0,\tau]\rightarrow \mathbb{R}^{d\times d}\times \mathbb{R}^{d\times d}\times \mathbb{R}^d$
    so that the flow map $\phi^{\tau}$ associated with the neural ODE (\ref{eq:neural_ODE}) satisfies:
    $||f^*-\phi^{\tau}||_{L^p(\mathcal{K})} \leq \varepsilon.$
\end{lemma}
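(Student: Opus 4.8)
The plan is to prove the statement through two reductions: first, reduce the approximation of the continuous map $f^*$ to the approximation of an orientation-preserving diffeomorphism, and second, reduce that to a controllability statement for the neural ODE (\ref{eq:neural_ODE}). For the first reduction I would invoke the classical fact (\cite{Brenier2003Approximation}) that, when $d\ge 2$, every continuous map on a compact set can be approximated in the $L^p$ norm by a smooth orientation-preserving diffeomorphism $\Psi$; since $\Psi$ can be joined to the identity by an isotopy, it can in turn be realized (up to arbitrary accuracy) as the time-$\tau$ flow of a time-dependent vector field. It therefore suffices to produce piecewise-constant controls $(A,W,b)$ on some interval $[0,\tau]$ whose flow map $\phi^{\tau}$ of (\ref{eq:neural_ODE}) satisfies $\|\Psi-\phi^{\tau}\|_{L^p(\mathcal{K})}\le\varepsilon/2$. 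This is exactly where the hypothesis $d\ge 2$ is used: in dimension one a monotone flow map cannot reverse the order of points and the Brenier--Gangbo approximation is false.

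For the second reduction I would cover $\mathcal{K}$ by finitely many small cubes $Q_1,\dots,Q_M$ with centers $c_1,\dots,c_M$, fine enough that $f^*$ and $\Psi$ oscillate by less than $\varepsilon/4$ on each $Q_i$ (uniform continuity on the compact set $\mathcal{K}$). The goal is to steer the flow of (\ref{eq:neural_ODE}) so that $\phi^{\tau}(c_i)$ is close to $\Psi(c_i)$ for every $i$ \emph{simultaneously}, while each cube $Q_i$ is mapped to a set of diameter $O(\varepsilon)$. The basic building block, exploiting the saturation of $\tanh$ at $\pm 1$ by taking $\|W\|$ large, is an elementary flow that is close to a translation on one side of a chosen hyperplane and close to the identity outside a thin slab around it. Composing such elementary flows along the coordinate directions $e_j$, and ordering the points according to a discretized straight-line isotopy from the identity to $\Psi$ (so that at each stage a hyperplane separates the points already placed from those still to be moved), realizes the desired finite correspondence $c_i\mapsto\Psi(c_i)$ with a flow map of controlled Lipschitz constant. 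Controllability results of precisely this type for one-hidden-layer neural ODEs are established in \cite{Li2022Deep} and \cite{Ruiz-Balet2021Neural}, which I would cite rather than reprove — indeed the whole lemma is a special case of \cite{Li2022Deep}.

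Assembling the pieces: let $E\subset\mathcal{K}$ be the union of the thin slabs around the separating hyperplanes used above, together with any cells near which trajectories of distinct centers come close; by construction $|E|$ can be made as small as we like. For $x\in Q_i\setminus E$ one has $|\phi^{\tau}(x)-f^*(x)|\le|\phi^{\tau}(x)-\Psi(c_i)|+|\Psi(c_i)-f^*(c_i)|+|f^*(c_i)-f^*(x)|=O(\varepsilon)$, using the diameter bound on $\phi^{\tau}(Q_i)$, the closeness of $\Psi$ to $f^*$ (arranged in $C^0$ off a small set), and uniform continuity. On $E$ both $\phi^{\tau}$ — which stays in a fixed compact set for finite $\tau$ since $\tanh$ and the controls are bounded — and $f^*$ are bounded, so $\int_E|\phi^{\tau}-f^*|^p\le C|E|$ is negligible. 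Summing over the cubes and choosing all the small parameters in the right order gives $\|f^*-\phi^{\tau}\|_{L^p(\mathcal{K})}\le\varepsilon$.

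The main obstacle is the \emph{quantitative} simultaneous controllability: not merely matching the finitely many points $c_i$, but ensuring that the composition of $O(M)$ elementary push maps does not smear the cells $Q_i$ across $\mathcal{K}$. This needs (i) each elementary flow to be $(1+o(1))$-Lipschitz, so that the Lipschitz constant of the composition stays bounded in terms of $M$ and $\Psi$ alone, and (ii) the isotopy from the identity to $\Psi$ to be discretized finely enough that at each step the ``already moved'' and ``not yet moved'' points remain linearly separable, so a saturated-$\tanh$ field can act on one group while leaving the other essentially fixed. Both ingredients are standard in the neural-ODE controllability literature in dimension $d\ge 2$; the remaining work is bookkeeping the error budget across the two reductions.
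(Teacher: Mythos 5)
Your proposal is correct and ultimately rests on the same foundation as the paper: the paper's proof of this lemma is a one-line citation to Theorem 2.3 of \cite{Li2022Deep}, and you likewise conclude that the lemma is a special case of that result (with the same underlying two-step reduction via \cite{Brenier2003Approximation} that the paper sketches in Section 4). Your additional outline of the controllability construction inside the cited theorem is a reasonable account of that proof but is not required here.
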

\begin{proof}
    This is a special case of Theorem 2.3 in \cite{Li2022Deep}.
\end{proof}

\subsection{Proof of Lemma \ref{th:leaky_ReLU_approximate_phi}}

\begin{lemma}
    If the parameters $(A,W,b)$ in (\ref{eq:neural_ODE}) are piecewise constants, then for any compact set $\mathcal{K}$ and any $\varepsilon>0$, there is a leaky-ReLU network $f_L(x)$ with width $d$ and depth $L$ such that
    \begin{align}
        \|\phi^\tau(x) - f_L(x)\| \le \varepsilon, \forall x \in \mathcal{K}.
    \end{align}
\end{lemma}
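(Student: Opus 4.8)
The plan is to turn the operator‑splitting sketch given right after Corollary \ref{th:main_LpUAP_leaky_ReLU_d} into a quantitative estimate. First I would make three harmless reductions. (i) Since $(A,W,b)$ are piecewise constant, it suffices to treat one interval on which they are constant; the flow over $[0,\tau]$ is the composition of the flows over the finitely many constancy intervals, each carries a compact set to a compact set, so one approximates on each piece and composes. (ii) On each such piece I may perturb the constant matrix $W$ by an arbitrarily small amount so that \emph{every} entry of $W$ is nonzero; by continuous dependence of ODE solutions on parameters (Gronwall on the fixed compact region swept out by $\mathcal{K}$ over $[0,\tau]$, which is compact because $v(x,t)=A\tanh(Wx+b)$ is bounded), the flow map changes by at most $\varepsilon/3$ in the uniform norm on $\mathcal{K}$. (iii) Time‑discretize: with $\Delta t=\tau/K$ apply the coordinatewise splitting from the excerpt, $x_{k+1}=T^{(d,d)}_k\circ\cdots\circ T^{(1,1)}_k x_k$, where $T^{(i,j)}_k$ modifies only coordinate $j$, by $x^{(j)}\mapsto x^{(j)}+a\Delta t\,\tanh(w^{\top}x+\beta)$ with $a=A_{ji},w=W_{i,:},\beta=b_i$ at $t=k\Delta t$. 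Because $v$ is $C^{\infty}$ with bounded derivatives on that compact region, the standard local/global error bound for Lie–Trotter splitting with explicit Euler (see e.g. \cite{McLachlan2002Splitting}) gives $\|x_K-\phi^{\tau}(x_0)\|\le C\Delta t$ uniformly for $x_0\in\mathcal{K}$, so choosing $K$ large makes this $\le\varepsilon/3$. It then remains to approximate the composition of the $\mathcal{O}(dK)$ maps $T^{(i,j)}_k$ by a width‑$d$ leaky‑ReLU network with total error $\le\varepsilon/3$; since there are finitely many factors with uniformly bounded Lipschitz constants on the relevant compact sets, a telescoping estimate reduces this to approximating each single $T^{(i,j)}_k$ to small enough uniform error.

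The core step is realizing one elementary map $T=T^{(i,j)}_k$ by a leaky‑ReLU network of width \emph{exactly} $d$. Take $\Delta t$ small enough that the scalar profile $\psi(\xi):=\xi+w_j a\Delta t\,\tanh\xi$ is strictly increasing (its derivative is $\ge 1-|w_j a|\Delta t>0$). Let $\Phi$ be the affine bijection of $\mathbb{R}^d$ that replaces the $j$‑th coordinate by $w^{\top}x+\beta$ and fixes all other coordinates — invertible precisely because $w_j\neq 0$, which the preliminary perturbation guarantees. A direct computation shows that the conjugate $\Phi\circ T\circ\Phi^{-1}$ acts as the identity on every coordinate except the $j$‑th, and on the $j$‑th as $\psi$; i.e.\ conjugation collapses the shear $T$ into a genuinely coordinatewise map: identity on $d-1$ channels, a monotone continuous function on one. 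Such a map is uniformly approximated on compact sets by a width‑$d$ leaky‑ReLU network — apply the width‑one monotone‑approximation result of \cite{Duan2022Vanilla} (quoted in Section \ref{sec:case_of_d=1}) to channel $j$, and pass the other $d-1$ channels through unchanged, which costs no extra width because inserting large positive biases (absorbed into the surrounding affine parts, which leaky‑ReLU layers realize whenever preactivations are kept nonnegative) makes leaky‑ReLU act as the identity on those channels. Composing with $\Phi,\Phi^{-1}$ (affine, hence realizable by leaky‑ReLU layers) gives a width‑$d$ network approximating $T$ to any accuracy. The degenerate subcases are immediate: $a=0$ makes $T$ the identity, $w\equiv 0$ makes $T$ a translation (affine), and $w_j=0$ with $w\not\equiv 0$ is excluded by the perturbation.

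Finally I would concatenate the width‑$d$ approximants of $T^{(1,1)}_0,\dots,T^{(d,d)}_{K-1}$ in order; choosing the per‑factor accuracy inversely to the product of the uniform Lipschitz bounds times the number of factors makes the composition approximate $x_0\mapsto x_K$ within $\varepsilon/3$ on $\mathcal{K}$, and adding the two earlier $\varepsilon/3$ bounds yields $\|\phi^{\tau}(x)-f_L(x)\|\le\varepsilon$; folding consecutive affine pieces leaves $f_L$ in the standard form (\ref{eq:FNN}) with width $d$. The step I expect to be the real obstacle is the middle one — squeezing each near‑identity shear into \emph{exactly} $d$ neurons per layer. The affine‑conjugation trick that turns the shear into a coordinatewise map is what makes width $d$ (rather than $d+1$) attainable, and one must also check carefully that the pass‑through channels and the fold‑in of affine maps stay compatible with the leaky‑ReLU nonlinearity and that the error does not blow up through the long composition.
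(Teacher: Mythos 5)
Your argument is correct and follows essentially the same route as the paper, which proves this lemma simply by citing Theorem 2.2 of Duan et al.\ (2022) and sketches the very same coordinatewise Lie--Trotter splitting in Section \ref{sec:case_of_N=d}. Your write-up just makes the cited proof self-contained, supplying the details the paper omits (the affine conjugation that turns each shear $T_k^{(i,j)}$ into a coordinatewise monotone map, the small perturbation ensuring $W_{i,j}\neq 0$, and the Lipschitz/telescoping control of the error through the long composition), all of which check out.
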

\begin{proof}
    It is Theorem 2.2 in \cite{Duan2022Vanilla}.
\end{proof}

\subsection{Proof of Corollary \ref{th:main_LpUAP_leaky_ReLU_d}}

\begin{corollary}
    Let $\mathcal{K} \subset \mathbb{R}^{d}$ be a compact set and $d\ge 2$; then, for the function class $L^p(\mathcal{K},\mathbb{R}^{d})$, the leaky-ReLU networks with width $d$ have $L^p$-UAP.
\end{corollary}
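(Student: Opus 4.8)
The plan is to obtain this corollary by chaining the two preceding lemmas together with the elementary fact that continuous functions are dense in $L^p$ on a set of finite measure. Fix a target $f^* \in L^p(\mathcal{K},\mathbb{R}^d)$ and $\varepsilon>0$; the goal is to produce a width-$d$ leaky-ReLU network $f_L$ with $\|f^*-f_L\|_{L^p(\mathcal{K})}<\varepsilon$. Since $\varepsilon$ is arbitrary, this establishes $L^p$-UAP.

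First I would reduce to a continuous target: because $\mathcal{K}$ is compact we have $|\mathcal{K}|<\infty$, and by density of $C(\mathcal{K},\mathbb{R}^d)$ in $L^p(\mathcal{K},\mathbb{R}^d)$ I can choose $g\in C(\mathcal{K},\mathbb{R}^d)$ with $\|f^*-g\|_{L^p(\mathcal{K})}<\varepsilon/3$, and then extend $g$ coordinatewise to a continuous map $\bar g:\mathbb{R}^d\to\mathbb{R}^d$ via the Tietze extension theorem, so that Lemma \ref{th:ODE_Lp_approximation_C}, which is phrased for continuous maps on all of $\mathbb{R}^d$, is literally applicable. Next I would invoke Lemma \ref{th:ODE_Lp_approximation_C} (here $d\ge 2$ is used) with target $\bar g$, compact set $\mathcal{K}$, and accuracy $\varepsilon/3$ to get a time $\tau>0$ and piecewise-constant parameters $(A,W,b)$ whose associated flow map $\phi^\tau$ satisfies $\|\bar g-\phi^\tau\|_{L^p(\mathcal{K})}\le\varepsilon/3$, hence $\|g-\phi^\tau\|_{L^p(\mathcal{K})}\le\varepsilon/3$. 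Finally I would feed these same piecewise-constant parameters into Lemma \ref{th:leaky_ReLU_approximate_phi}, with accuracy $\delta:=\varepsilon/(3(1+|\mathcal{K}|^{1/p}))$, to obtain a width-$d$ leaky-ReLU network $f_L$ (of some depth $L$) with $\|\phi^\tau(x)-f_L(x)\|\le\delta$ for all $x\in\mathcal{K}$; since $\mathcal{K}$ has finite measure, this uniform estimate upgrades to $\|\phi^\tau-f_L\|_{L^p(\mathcal{K})}\le\delta\,|\mathcal{K}|^{1/p}<\varepsilon/3$. Three applications of the triangle inequality in $L^p(\mathcal{K},\mathbb{R}^d)$ then give $\|f^*-f_L\|_{L^p(\mathcal{K})}<\varepsilon$, as desired.

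There is essentially no serious obstacle, since the two substantive lemmas are quoted; the only points that need a word of care are (i) upgrading the $L^\infty$ estimate of Lemma \ref{th:leaky_ReLU_approximate_phi} to an $L^p$ estimate, which is immediate from the boundedness of $\mathcal{K}$, and (ii) arranging that the continuous approximant is defined on all of $\mathbb{R}^d$ so that Lemma \ref{th:ODE_Lp_approximation_C} applies verbatim, which the Tietze extension handles. If a more self-contained argument were wanted, one could instead unfold Lemma \ref{th:leaky_ReLU_approximate_phi} along the lines sketched after the corollary: discretize the neural ODE (\ref{eq:neural_ODE}) by operator splitting into scalar monotone updates of the form $x^{(j)}\mapsto x^{(j)}+a\Delta t\,\tanh(wx+\beta)$, observe that each such one-dimensional monotone map is uniformly approximable on a compact set by a width-$d$ leaky-ReLU block, and compose the blocks; but for the purposes of this corollary, citing the two lemmas and the density argument above suffices.
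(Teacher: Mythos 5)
Your proposal is correct and follows essentially the same route as the paper: invoke Lemma \ref{th:ODE_Lp_approximation_C} to get a flow map close to the target in $L^p(\mathcal{K})$, invoke Lemma \ref{th:leaky_ReLU_approximate_phi} to approximate that flow map uniformly by a width-$d$ leaky-ReLU network, and conclude by the triangle inequality. Your additional steps (reducing an $L^p$ target to a continuous one by density, extending it to $\mathbb{R}^d$ via Tietze, and converting the uniform bound to an $L^p$ bound using $|\mathcal{K}|^{1/p}$) are routine details that the paper's two-line proof leaves implicit, so they make the argument slightly more careful but not genuinely different.
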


\begin{proof}
    For any $f^*(x) \in L^p(\mathcal{K},\mathbb{R}^{d})$ and $\varepsilon>0$, there is a flow map $\phi^\tau(x)$ associated with the neural ODE (\ref{eq:neural_ODE}) such that (according to Lemma \ref{th:ODE_Lp_approximation_C})
    \begin{align*}
        \|f^*(\cdot) - \phi^\tau(\cdot)\|_{L^p} \le \frac{\varepsilon}{2}.
    \end{align*}
    Then, employing Lemma \ref{th:leaky_ReLU_approximate_phi}, there is a leaky-ReLU network $f_L$ such that
    \begin{align*}
        \|f_L(\cdot) - \phi^\tau(\cdot)\|_{L^p} \le \frac{\varepsilon}{2}.
    \end{align*}
    Therefore, we have
    \begin{align*}
        \|f_L(\cdot) - f^*(\cdot)\|_{L^p} 
        \le 
        \|f^*(\cdot) - \phi^\tau(\cdot)\|_{L^p}
        +
        \|f_L(\cdot) - \phi^\tau(\cdot)\|_{L^p}
        \le
        \varepsilon.
    \end{align*}
\end{proof}

\section{Proof of the main results}

\setcounter{theorem}{0}

\subsection{Proof of Lemma \ref{th:universal_w_min}}

\begin{lemma}
    For any compact domain $\mathcal{K} \subset \mathbb{R}^{d_x}$ and any finite set of activation functions $\{\sigma_i\}$, the $\{\sigma_i\}$ networks with width $w < w^*_{\min} \equiv \max(d_x, d_y)$ do not have the UAP for both $L^p(\mathcal{K},\mathbb{R}^{d_y})$ and $C(\mathcal{K},\mathbb{R}^{d_y})$.
\end{lemma}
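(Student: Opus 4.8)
The plan is to produce, in each of two regimes $w<d_x$ and $w<d_y$, a single target function that no width-$w$ network can approximate; since $w<\max(d_x,d_y)$ forces at least one of these regimes, this proves the lemma, and in both cases the argument invokes only the affine input and output layers of the network — which is exactly why the bound is insensitive to the activation functions. As a preliminary reduction, observe that if width-$w$ networks had the UAP on $\mathcal K$ they would have it on every cube $Q\subseteq\mathcal K$ (extend a target from $Q$ to $\mathcal K$ — by Tietze in the $C$ case, by zero in the $L^p$ case — approximate on $\mathcal K$, and restrict to $Q$), so I may assume $\mathcal K=[-2,2]^{d_x}$.

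\textbf{Regime $w<d_x$ (too few input directions).} The first hidden layer of a width-$w$ network is $x\mapsto W_1x+b_1$ with $W_1\in\mathbb R^{w\times d_x}$, so $\ker W_1\neq\{0\}$; fix a unit $v\in\ker W_1$. Since everything downstream depends on $x$ only through $W_1x$, every width-$w$ network obeys $f(x)=f(x+\lambda v)$ for all $\lambda\in\mathbb R$. Take $f^*(x)=(\|x\|^2,0,\dots,0)$ and the disjoint balls $A=\{x:\|x\|\le0.1\}$ and $B=A+v$. Restricting the integral to $A\cup B$, changing variables on the part over $B$ by $y=x+v$, and using $\|\cdot\|\ge|(\cdot)_1|$ together with $f_1(x)=f_1(x+v)$ and $|s-p|+|s-q|\ge|p-q|$,
\begin{align*}
\int_{\mathcal K}\|f-f^*\|\,dx
&\ \ge\ \int_A\big(|f_1(x)-\|x\|^2|+|f_1(x+v)-\|x+v\|^2|\big)\,dx\\
&\ \ge\ \int_A\big|\,\|x\|^2-\|x+v\|^2\,\big|\,dx\ =\ \int_A|1+2\langle x,v\rangle|\,dx\ \ge\ 0.8\,|A|>0 ,
\end{align*}
the last step because $|\langle x,v\rangle|\le0.1$ on $A$. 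Hence the $L^1$ error is at least a fixed positive constant, and since $\|g\|_{L^1(\mathcal K)}\le|\mathcal K|^{1-1/p}\|g\|_{L^p(\mathcal K)}$ and $\|g\|_{L^1(\mathcal K)}\le|\mathcal K|\,\|g\|_{C(\mathcal K)}$, the $L^p$ and uniform errors are bounded below as well — no UAP.

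\textbf{Regime $w<d_y$ (too few output directions).} Now the last hidden layer has width $w$, so every width-$w$ network is $f(x)=W_{L+1}\psi(x)+b_{L+1}$ with $W_{L+1}\in\mathbb R^{d_y\times w}$; its image therefore lies in an affine subspace of dimension $\le w\le d_y-1$, hence in some hyperplane $H_{a,\beta}=\{z:\langle a,z\rangle=\beta\}$ with $\|a\|=1$. Take $f^*=\gamma\circ\ell$, where $\ell:\mathcal K\to[0,1]$ is an affine surjection and $\gamma(t)=(t,t^2,\dots,t^{d_y})$ is the moment curve, with the property that any $d_y+1$ distinct points on it are affinely independent. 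For $C$-UAP, fix distinct $t_0,\dots,t_{d_y}$ and put $p_i=\gamma(t_i)$; the quantity $\min_\beta\max_i|\langle a,p_i\rangle-\beta|$ (half the spread of the numbers $\langle a,p_i\rangle$) is positive for each unit $a$ because the $p_i$ do not lie on a common hyperplane, so by compactness of the unit sphere $c_0:=\min_{\|a\|=1}\min_\beta\max_i|\langle a,p_i\rangle-\beta|>0$, whence $\|f-f^*\|_{C(\mathcal K)}\ge\max_i\mathrm{dist}(p_i,H_{a,\beta})\ge c_0$. For $L^p$-UAP, $\mathrm{dist}(f^*(x),H_{a,\beta})=|\langle a,\gamma(\ell(x))\rangle-\beta|$ and $t\mapsto\langle a,\gamma(t)\rangle=\sum_i a_it^i$ is a non-constant polynomial whenever $a\neq0$, so $m(a):=\inf_{\beta\in\mathbb R}\int_{\mathcal K}|\langle a,\gamma(\ell(x))\rangle-\beta|^p\,dx>0$ for each unit $a$; continuity of $m$ on the unit sphere yields $\delta:=\min_{\|a\|=1}m(a)>0$, hence $\|f-f^*\|_{L^p(\mathcal K)}^p\ge\delta$ for every width-$w$ network — no UAP.

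The structural facts used (a width-$w$ network factors through $\mathbb R^w$ at each end) are immediate, and the overall strategy fuses the insufficient-input-dimension idea behind earlier ReLU lower bounds with the simplex-volume idea of \cite{Park2021Minimum}. The one step that genuinely needs care is the second regime, where the topological obstruction "the image avoids every hyperplane" must be made \emph{quantitative and uniform over all affine hyperplanes}; I expect that to be the main obstacle, and I would handle it by the sphere-compactness argument above for the $C$ norm and by continuity of $a\mapsto m(a)$, together with the non-degeneracy of the moment curve, for the $L^p$ norm.
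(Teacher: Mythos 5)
Your proof is correct and follows essentially the same two-counterexample strategy as the paper: for $w<d_x$ the identical kernel-direction/translated-balls argument with $f^*=\|x\|^2$ and the $0.8|A|$ bound, and for $w<d_y$ the same structural observation that the output factors through a hyperplane while a suitable curve stays uniformly away from every hyperplane. Your second case is in fact carried out more carefully than the paper's (which only asserts "positive distance to any hyperplane"): replacing the edge path by the moment curve and using compactness of the unit sphere of normals makes the obstruction quantitative and uniform, which is exactly the step that needs justification.
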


\begin{proof}
    It is enough to show the following two counterexamples $f^*(x)$ that cannot be approximated in the $L^p$-norm.
        
1) $f^*(x)=\|x\|^2, x \in \mathcal{K} = [-2,2]^{d_x},$ cannot be approximated by any networks with widths less than $d_x-1$. In fact, we can relax the networks to a function $f(x) = \phi(Wx+b)$, where $Wx+b$ is a transformer from $\mathbb{R}^{d_x}$ to $\mathbb{R}^{d_x-1}$ and $\phi(x)$ could be any function. A consequence is that there exists a direction $v$ (set as the vector satisfying $Wv=0$, $\|v\|=1$) such that $f(x) = f(x+\lambda v)$ for all $\lambda \in \mathbb{R}$. Then, considering the sets $A=\{x : \|x\|\le 0.1\}$ and $B=\{x : \|x-v\|\le 0.1\}$, we have
\begin{align*}
    \int_\mathcal{K} |f(x) - f^*(x)| dx
    &\ge
    \int_A |f(x) - f^*(x)| dx + \int_B |f(x) - f^*(x)| dx \\
    &\ge 
    \int_A (|f(x) - f^*(x)| +|f(x+v) - f^*(x+v)|) dx\\
    &\ge 
    \int_A (|f^*(x) - f^*(x+v)|) dx
    \ge 0.8 |A|.
\end{align*}
Since the volume of $A$ is a fixed positive number, the inequality implies that even the $L^1$ approximation for $f^*$ is impossible. The case of the $L^p$ norm and the uniform norm is impossible as well.

2) The function $f^*$, the parametrized curve from $\textbf{0}$ to $\textbf{1}$ along the edge of the cubic, cannot be approximated by any networks with a width less than $d_y-1$. Relaxing the networks to a function $f(x) = W \psi(x) + b$, $\psi(x)$ could be any function. Since the range of $f$ is in a hyperplane while $f^*$ has a positive distance to any hyperplane, the target $f^*$ cannot be approximated.
\end{proof}

\subsection{Proof of Theorem \ref{th:main_LpUAP_leaky_ReLU}}

\begin{theorem}
    Let $\mathcal{K} \subset \mathbb{R}^{d_x}$ be a compact set; then, for the function class $L^p(\mathcal{K},\mathbb{R}^{d_y})$, the minimum width of leaky-ReLU networks having $L^p$-UAP is exactly $w_{\min} = \max(d_x, d_y,2)$.
\end{theorem}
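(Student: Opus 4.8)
The goal is to sandwich the minimum width between $\max(d_x,d_y,2)$ from below and $\max(d_x,d_y,2)$ from above. The lower bound will combine Lemma~\ref{th:universal_w_min} with a monotonicity obstruction that rules out width one; the upper bound will be obtained by reducing the general $(d_x,d_y)$ situation to the equal-dimension statement already established in Corollary~\ref{th:main_LpUAP_leaky_ReLU_d}.

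\textbf{Lower bound.} Lemma~\ref{th:universal_w_min} already gives $w_{\min}\ge\max(d_x,d_y)$, so the only remaining point is that width one fails, which is only binding when $d_x=d_y=1$. A width-one leaky-ReLU network $\mathbb{R}\to\mathbb{R}$ is a composition of scalar affine maps and the strictly increasing map $\sigma_\alpha$, hence is itself a monotone function on $[0,1]$. The set of monotone (increasing or decreasing) functions is closed in $L^p([0,1])$: given a convergent sequence of, say, increasing functions, pass to an a.e.\ convergent subsequence; its pointwise limit is increasing off a null set and therefore agrees a.e.\ with an increasing function. This closed set is proper in $L^p([0,1])$ --- for instance $\sin(2\pi x)$ is not a.e.\ equal to any monotone function --- so it has positive $L^p$-distance to some target, and width-one leaky-ReLU networks do not have $L^p$-UAP. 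Hence $w_{\min}\ge\max(d_x,d_y,2)$.

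\textbf{Upper bound.} Put $d=\max(d_x,d_y,2)\ge 2$; as remarked after Lemma~\ref{th:universal_w_min} we may assume $\mathcal{K}=[0,1]^{d_x}$. Given $f^*\in L^p([0,1]^{d_x},\mathbb{R}^{d_y})$ and $\varepsilon>0$, I would define $F^*:[0,1]^d\to\mathbb{R}^d$ by $F^*(x,u)=(f^*(x),0)$, where $u\in[0,1]^{d-d_x}$ is an added block of input coordinates and the output is padded with $d-d_y$ zeros; then $F^*\in L^p([0,1]^d,\mathbb{R}^d)$. Since $d\ge 2$, Corollary~\ref{th:main_LpUAP_leaky_ReLU_d} yields a width-$d$ leaky-ReLU network $G:\mathbb{R}^d\to\mathbb{R}^d$ with $\|G-F^*\|_{L^p([0,1]^d)}<\varepsilon$; as $[0,1]^{d-d_x}$ has unit measure, Fubini produces some $u_0\in[0,1]^{d-d_x}$ with $\int_{[0,1]^{d_x}}\|G(x,u_0)-(f^*(x),0)\|^p\,dx<\varepsilon^p$. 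Letting $\Pi:\mathbb{R}^d\to\mathbb{R}^{d_y}$ be the projection onto the first $d_y$ coordinates and $g(x)=\Pi\,G(x,u_0)$, we get $\|g-f^*\|_{L^p}<\varepsilon$. Finally $x\mapsto(x,u_0)$ is affine and $\Pi$ is linear, so composing both with $G$ only changes the first weight matrix and bias and the last weight matrix; thus $g$ is again a leaky-ReLU network of the form~(\ref{eq:FNN}) with hidden width $d$, input dimension $d_x$ and output dimension $d_y$. Hence width $\max(d_x,d_y,2)$ leaky-ReLU networks have $L^p$-UAP, and together with the lower bound, $w_{\min}=\max(d_x,d_y,2)$.

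\textbf{Main obstacle.} The genuine approximation content --- that width-$d$ leaky-ReLU networks are $L^p$-dense whenever $d_x=d_y=d\ge 2$ --- is already supplied by Corollary~\ref{th:main_LpUAP_leaky_ReLU_d} via the neural-ODE construction, so what remains is mostly careful bookkeeping. The two places that need attention are: (i) reducing the general $(d_x,d_y)$ case to the equal-dimension case without a measure-zero artefact, which is precisely why $\mathcal{K}$ is thickened to a full-dimensional cube $[0,1]^d$ rather than embedded on a lower-dimensional slice of positive codimension; and (ii) verifying that padding the input and projecting the output costs no extra neurons, so that the ambient width stays exactly $d$. On the lower-bound side the only nontrivial ingredient is the elementary fact that the monotone functions form a closed, proper subset of $L^p([0,1])$.
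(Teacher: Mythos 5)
Your proof is correct and follows essentially the same route as the paper: the lower bound comes from Lemma~\ref{th:universal_w_min} plus the observation that width-one leaky-ReLU networks are monotone, and the upper bound comes from padding the input/output to dimension $d=\max(d_x,d_y,2)$ and invoking Corollary~\ref{th:main_LpUAP_leaky_ReLU_d}. You merely spell out two details the paper leaves implicit --- the $L^p$-closedness of the set of monotone functions and the Fubini/slice argument for fixing the padded input coordinates --- both of which are sound.
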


\begin{proof}
    Using Lemma \ref{th:universal_w_min}, we only need to prove two points: 1) the $L^p$-UAP holds when $\max(d_x,d_y) \ge 2$, 2) when $d_x=d_y=1$, there is a function that cannot be approximated by leaky-ReLU networks with width one (since width two is enough for the $L^p$-UAP).

    The first point is a consequence of Corollary \ref{th:main_LpUAP_leaky_ReLU_d} since we can extend the target function to dimension $d=\max(d_x,d_y)$.

    The second point is obvious since leaky-ReLU networks with a width of one are monotone functions that cannot approximate nonmonotone functions such as $f^*(x) = x^2, x \in [-1,1]$.
\end{proof}

\subsection{Proof of Theorem \ref{th:main_LpUAP_leaky_ReLU+ABS}}

\begin{theorem}
    Let $\mathcal{K} \subset \mathbb{R}^{d_x}$ be a compact set; then, for the function class $L^p(\mathcal{K},\mathbb{R}^{d_y})$, the minimum width of leaky-ReLU+ABS networks having $L^p$-UAP is exactly $w_{\min} = \max(d_x, d_y)$.
\end{theorem}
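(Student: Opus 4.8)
\textbf{Proof proposal for Theorem \ref{th:main_LpUAP_leaky_ReLU+ABS}.}
The plan is to reduce the statement to the two cases already handled and then patch the remaining low-dimensional case using the one-dimensional construction of Section \ref{sec:case_of_d=1}. First, the lower bound $w_{\min} \ge \max(d_x,d_y)$ is immediate from Lemma \ref{th:universal_w_min}, since ABS is just one more activation function among $\{\sigma_i\}$ and the counterexamples there are agnostic to the activation set. So the work is entirely in the upper bound: showing that leaky-ReLU+ABS networks of width exactly $w^*_{\min} = \max(d_x,d_y)$ have $L^p$-UAP on any compact $\mathcal{K}$.

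For the upper bound I would split on $\max(d_x,d_y)$. If $\max(d_x,d_y) \ge 2$, then Theorem \ref{th:main_LpUAP_leaky_ReLU} (equivalently Corollary \ref{th:main_LpUAP_leaky_ReLU_d} after zero-padding the target to dimension $d = \max(d_x,d_y)$) already gives $L^p$-UAP with leaky-ReLU networks of width $\max(d_x,d_y)$ — adding ABS to the activation set cannot hurt, so the same width suffices. The only case not covered is $d_x = d_y = 1$, where Theorem \ref{th:main_LpUAP_leaky_ReLU} gives $w_{\min} = 2$ rather than $1$, because leaky-ReLU alone is monotone. Here is precisely where ABS enters: Lemma \ref{th:1D_UAP} (proved via Lemma \ref{lemma:appendix}) states that for any $f^* \in C[0,1]$ and $\varepsilon > 0$ there is a leaky-ReLU+ABS network of width one approximating $f^*$ within $\varepsilon$ in $L^p$. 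Since continuous functions are dense in $L^p([0,1],\mathbb{R})$, this yields $L^p$-UAP for leaky-ReLU+ABS networks of width $1 = \max(d_x,d_y)$ when $d_x=d_y=1$, matching the lower bound.

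Combining the three cases — $\max(d_x,d_y)\ge 2$ handled by the neural-ODE construction, and $d_x=d_y=1$ handled by the one-dimensional composition $v\circ g \circ u$ with a nonmonotone outer activation — gives $w_{\min} = \max(d_x,d_y)$ in all cases. The main obstacle, and the only genuinely new ingredient over Theorem \ref{th:main_LpUAP_leaky_ReLU}, is the $d_x=d_y=1$ case, i.e. establishing Lemma \ref{th:1D_UAP}: one must argue that a polynomial approximant $p_n$ of $f^*$ can, up to small $L^p$ error, be written as $g \circ u$ where $u$ is monotone increasing (hence leaky-ReLU approximable in width one by \cite{Duan2022Vanilla}) and $g$ is a fixed nonmonotone template (sawtooth, built from ABS, or sine) whose ordering of extrema matches that of $p_n$ — the $L^p$ relaxation being what lets us tolerate the imperfect matching of extremal \emph{values} while the sawtooth/ABS network reproduces $g$ on the relevant finite interval. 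Everything else is bookkeeping: zero-padding targets to square dimension, and noting that enlarging the activation set only enlarges the class of representable functions.
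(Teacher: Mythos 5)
Your proposal is correct and follows essentially the same route as the paper: the lower bound from Lemma \ref{th:universal_w_min}, the case $\max(d_x,d_y)\ge 2$ from Theorem \ref{th:main_LpUAP_leaky_ReLU} (via zero-padding and Corollary \ref{th:main_LpUAP_leaky_ReLU_d}), and the case $d_x=d_y=1$ from the width-one leaky-ReLU+ABS construction of Lemma \ref{th:1D_UAP}. Your added remarks on density of $C[0,1]$ in $L^p$ and on why the $L^p$ relaxation tolerates imperfect matching of extremal values are consistent with the paper's argument.
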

\begin{proof}
    
This is a consequence of Theorem \ref{th:main_LpUAP_leaky_ReLU} (for the case of $\max(d_x, d_y)\ge 2$) combined with Lemma \ref{th:1D_UAP} (for the case of $d_x=d_y=1$).
\end{proof}

\subsection{Proof of Lemma \ref{th:C-UAP_ReLU+FLOOR}}

\begin{lemma}
    Let $\mathcal{K} \subset \mathbb{R}^{d_x}$ be a compact set; then, for the function class $C(\mathcal{K},\mathbb{R}^{d_y})$, the minimum width of ReLU+FLOOR networks having $C$-UAP is exactly $w_{\min} = \max(d_x, 2, d_y)$.
\end{lemma}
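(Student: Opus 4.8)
The plan is to prove the two inequalities $w_{\min}\ge\max(d_x,2,d_y)$ and $w_{\min}\le\max(d_x,2,d_y)$. For the lower bound, Lemma~\ref{th:universal_w_min} already gives $w_{\min}\ge\max(d_x,d_y)$ for any finite activation set, so it only remains to rule out width one, and only in the case $\max(d_x,d_y)=1$, i.e. $d_x=d_y=1$. A width-one ReLU+FLOOR network $\mathbb{R}\to\mathbb{R}$ is a composition of scalar maps $t\mapsto\sigma(wt+b)$ with $\sigma\in\{\mathrm{ReLU},\mathrm{FLOOR}\}$ followed by an affine map; since ReLU and FLOOR are nondecreasing, each factor is monotone, hence so is the whole network. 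A monotone function on $[0,1]$ cannot be uniformly within $1/4$ of $f^*(t)=|t-\tfrac12|$: if $g$ is nondecreasing then $g(0)\le g(\tfrac12)$, which is incompatible with $g(0)>\tfrac14>g(\tfrac12)$ as forced by $\|g-f^*\|_\infty<\tfrac14$, and the nonincreasing case is symmetric. Hence $w_{\min}\ge 2$, and combined with Lemma~\ref{th:universal_w_min}, $w_{\min}\ge\max(d_x,2,d_y)$.

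For the upper bound I would use the encoder-memorizer-decoder scheme of \citet{Park2021Minimum} with the step function replaced by FLOOR; the point of FLOOR is that its exactness lets the encoder pack all input coordinates into one scalar using only $d_x$ lanes, removing the extra register that the STEP-based encoder needs. WLOG $\mathcal{K}=[0,1]^{d_x}$ and the target class is $C([0,1]^{d_x},[0,1]^{d_y})$. Fix an integer $K$. Quantize the input by $m_i=\min(\floor{2^Kx_i},2^K-1)\in\{0,\dots,2^K-1\}$ (the $\min$, written $\floor{2^Kx_i}-\mathrm{ReLU}(\floor{2^Kx_i}-2^K+1)$, handles the face $x_i=1$) and let the \emph{encoder} output the scalar $\bar x=\sum_{i=1}^{d_x}2^{-iK}m_i\in[0,1)$, which is injective on the grid cells of side $2^{-K}$; it uses one FLOOR layer over the $d_x$ input coordinates plus a linear readout, so width $d_x$. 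The \emph{decoder} reverses the analogous packing of the output cube: from $\bar y=\sum_{j=1}^{d_y}2^{-jK}n_j\in[0,1)$ it recovers $n_j=\floor{2^{jK}\bar y}-2^K\floor{2^{(j-1)K}\bar y}$ (with $\floor{\bar y}=0$) and outputs $(2^{-K}n_1,\dots,2^{-K}n_{d_y})$; it uses $d_y$ parallel FLOOR evaluations of $\bar y$ plus a linear readout, so width $d_y$. The \emph{memorizer} is the scalar map sending each codeword $\bar x$ to the codeword of the quantization of $f^*$ at a fixed corner of the cell encoded by $\bar x$; being a function of one real variable on a bounded interval it is uniformly approximated by a width-$2$ ReLU network (this is the $d_x=1$ ReLU $C$-UAP, \cite{Hanin2018Approximating}, and is where the "$2$" enters), and composing it with a single FLOOR "snap" $z\mapsto 2^{-d_yK}\floor{2^{d_yK}z+\tfrac12}$ makes its output land exactly on the intended codeword. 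Concatenating the three modules into one ReLU+FLOOR network of width $W=\max(d_x,2,d_y)$ — each module using at most $W$ lanes, idle lanes carrying $0$ forward through ReLU units (identity on nonnegative inputs) — yields $f_L$ with $\|f_L(x)-f^*(x)\|_\infty\le\sqrt{d_y}\,2^{-K}+\omega_{f^*}(\sqrt{d_x}\,2^{-K})$ for all $x$, where $\omega_{f^*}$ is the modulus of continuity of $f^*$; letting $K\to\infty$ gives $C$-UAP, and with the lower bound $w_{\min}=\max(d_x,2,d_y)$.

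The hard part is making each module fit in exactly its width while keeping the final error uniform over all of $\mathcal{K}$. Two places need care: the encoder, where the exactness of FLOOR is essential — a step-function quantizer would need a separate accumulator lane and cost $d_x+1$; and the interface memorizer$\to$decoder, where the decoder is discontinuous (it applies FLOOR), so any residual error in the memorizer's output would be amplified — this is why the memorizer's output is first snapped onto the codeword grid, after which the decoder receives an exact codeword and the only remaining errors are the benign quantization errors bounded above. Everything else — choosing the cell corners, checking injectivity of the two packings, and routing the idle lanes through the ReLU layers — is routine bookkeeping.
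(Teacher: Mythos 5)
Your proof is correct and follows the same route as the paper: the lower bound comes from Lemma~\ref{th:universal_w_min} plus the observation that width-one ReLU+FLOOR networks are monotone (the paper uses $x^2$ on $[-1,1]$ where you use $|t-\tfrac12|$), and the upper bound is the Park et al.\ encoder--memorizer--decoder scheme with FLOOR in place of STEP, which the paper invokes without spelling out. Your version simply supplies the explicit packing/unpacking formulas and the codeword-snapping step that the paper leaves to the cited reference; the only cosmetic quibble is that your encoder expression $\floor{2^Kx_i}-\mathrm{ReLU}(\floor{2^Kx_i}-2^K+1)$ should be rewritten as $(2^K-1)-\mathrm{ReLU}(2^K-1-\floor{2^Kx_i})$ so that each coordinate occupies a single lane.
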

\begin{proof}
    Recalling the results of Lemma \ref{th:universal_w_min}, we only need to prove two points: 1) the C-UAP holds when $\max(d_x,d_y) \ge 2$, 2) when $d_x=d_y=1$, there is a function that cannot be approximated by ReLU+FLOOR networks with width one (since width two is enough for the $C$-UAP).

    The first step can be constructed by the encoder-memorizer-decoder approach. The second point is obvious since ReLU+FLOOR networks with width one are monotone functions that cannot approximate nonmonotone functions such as $f^*(x) = x^2, x \in [-1,1]$.
\end{proof}

\subsection{Proof of Theorem \ref{th:C-UAP_UOE}}

\begin{theorem}
    The UOE networks with width $d_y$ have $C$-UAP for functions in $C([0,1],\mathbb{R}^{d_y})$.
\end{theorem}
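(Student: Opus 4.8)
The plan is to reduce the $\mathbb{R}^{d_y}$-valued case to $d_y$ independent copies of the scalar result already in hand (Lemma~\ref{th:1D_UAP}) and then to fuse these copies into a single width-$d_y$ UOE network whose $d_y$ channels evolve without interacting. First I would decompose: given $f^*=(f^*_1,\dots,f^*_{d_y})\in C([0,1],\mathbb{R}^{d_y})$ and $\varepsilon>0$, each $f^*_j\in C([0,1],\mathbb{R})$, so by Lemma~\ref{th:1D_UAP} there is a leaky-ReLU$+$UOE network $g_j$ of width one and depth $L_j$ with $|f^*_j(x)-g_j(x)|<\varepsilon$ on $[0,1]$. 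Two routine reductions turn each $g_j$ into a pure UOE network of a common depth. (i) \emph{Eliminating leaky-ReLU:} as remarked after Definition~\ref{def:UOE}, $\rho$ represents $\sigma_{1/4}$ on any bounded interval, and slope $\tfrac14$ already suffices for the monotone-approximation step behind Lemma~\ref{th:1D_UAP} (via \cite{Duan2022Vanilla}); since the domain is compact and all intermediate maps are continuous, every neuron's input stays in a fixed bounded range, so each leaky-ReLU neuron can be replaced by a $\rho$-neuron with rescaled weights, the leftover outer affine map being absorbed into the next layer's weight and bias. (ii) \emph{Depth padding:} because $\rho(x)=x/4$ on $(-\infty,0]$, a single $\rho$-neuron realizes an arbitrary affine scalar map on a bounded range, so identity layers can be inserted to make all $g_j$ have the common depth $L:=\max_j L_j$.

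Next I would assemble the parallel network. Write the (scalar) parameters of $g_j$ as $(w^{(j)}_\ell,b^{(j)}_\ell)$ for $\ell=1,\dots,L+1$. Define a width-$d_y$ UOE network $f_L$ whose first hidden layer is $x\mapsto\rho(W_1x+b_1)$ with $(W_1)_j=w^{(j)}_1$, $(b_1)_j=b^{(j)}_1$; whose $\ell$-th hidden layer ($2\le\ell\le L$) uses the diagonal matrix $W_\ell=\mathrm{diag}(w^{(1)}_\ell,\dots,w^{(d_y)}_\ell)$ and bias $(b^{(1)}_\ell,\dots,b^{(d_y)}_\ell)$; and whose output layer uses $W_{L+1}=\mathrm{diag}(w^{(1)}_{L+1},\dots,w^{(d_y)}_{L+1})$ and bias $(b^{(1)}_{L+1},\dots,b^{(d_y)}_{L+1})$. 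Since every intermediate weight matrix is diagonal, the $d_y$ channels are decoupled and coordinate $j$ of $f_L(x)$ equals $g_j(x)$ exactly, so $\sup_{x\in[0,1]}\|f^*(x)-f_L(x)\|\le\sqrt{d_y}\,\varepsilon$; rescaling $\varepsilon$ at the start gives the $C$-UAP.

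The substantive ingredients (Weierstrass, the factorization $v\circ\rho\circ u$, uniform approximation of monotone functions) are all inherited from Lemma~\ref{th:1D_UAP}, so the only step needing genuine care is the bookkeeping of the assembly: verifying that the first layer — the sole layer that spreads the scalar input across the $d_y$ channels — and the output layer can be chosen to keep the channels decoupled, and that the activation substitution and depth padding really preserve ``width-$d_y$ pure UOE network.'' I do not expect an essential obstacle beyond this.
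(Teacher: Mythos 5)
Your proposal is correct and follows essentially the same route as the paper, whose entire proof is the observation that a function in $C([0,1],\mathbb{R}^{d_y})$ is $d_y$ scalar functions, each handled by Lemma~\ref{th:1D_UAP}. You merely make explicit the bookkeeping the paper leaves implicit (replacing leaky-ReLU neurons by $\rho$-neurons on a bounded range, depth padding, and the diagonal/decoupled parallel assembly), all of which checks out.
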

\begin{proof}
    Since functions in $C([0,1],\mathbb{R}^{d_y})$ can be regarded as $d_y$ one-dimensional functions, it is enough to prove the case of $d_y=1$, which is the result in Lemma \ref{th:1D_UAP}.
\end{proof}

\subsection{Proof of Corollary \ref{th:C-UAP_ReLU+FLOOR+UOE}}

\begin{corollary}
    Let $\mathcal{K} \subset \mathbb{R}^{d_x}$ be a compact set; then, for the continuous function class $C(\mathcal{K},\mathbb{R}^{d_y})$, the minimum width of UOE+FLOOR networks having $C$-UAP is exactly $w_{\min} = \max(d_x, d_y)$.
\end{corollary}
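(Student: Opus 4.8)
The lower bound $w_{\min}\ge\max(d_x,d_y)$ is already available: Lemma~\ref{th:universal_w_min} rules out $C$-UAP at any width below $\max(d_x,d_y)$ for \emph{every} finite activation set, in particular for $\{\mathrm{UOE},\mathrm{FLOOR}\}$. So the only thing to prove is the matching upper bound, and the plan is to reuse, almost verbatim, the encoder--memorizer--decoder (EMD) network constructed in the proof of Lemma~\ref{th:C-UAP_ReLU+FLOOR}, changing only the memorizer. As there, I would first normalize so that $\mathcal K=[0,1]^{d_x}$ and the target is $f^*\in C([0,1]^{d_x},[0,1]^{d_y})$ (an affine rescaling of the output cube), and fix $\varepsilon>0$.

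Next I would recall the three pieces and their widths. By uniform continuity of $f^*$, choose a quantization depth $K$ so large that the $K$-bit piecewise-constant quantization $\widehat{f^*}$ of $f^*$ satisfies $\|\widehat{f^*}-f^*\|_\infty<\varepsilon$ at every point (including points on cell boundaries, where the FLOOR-based encoder rounds to an adjacent cell whose center is still $O(2^{-K})$-close). The encoder $E$ is the width-$d_x$ FLOOR network of \cite{Park2021Minimum} mapping $[0,1]^{d_x}$ onto a finite codeword set $\mathcal C\subset[0,1]$; the decoder $D$ is the width-$d_y$ FLOOR network that reads off the bits of its scalar input and redistributes them to the $d_y$ output coordinates; and the memorizer is any continuous scalar map $M\colon[0,1]\to[0,1]$ with $M(E(x))=\bar y$, the codeword that $D$ decodes to $\widehat{f^*}(x)$ --- e.g. a piecewise-linear interpolant through the finitely many pairs $(\bar x,\bar y)$ with $\bar x\in\mathcal C$. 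The one design point I would make explicit (it is implicit in \cite{Park2021Minimum}) is that the output codewords $\bar y$ should be placed with a fixed margin $\eta>0$ from the jump locations of $D$, so that $D$ is constant on every $\eta$-ball around a codeword.

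The substitution is the actual content of the corollary. By Theorem~\ref{th:C-UAP_UOE} with $d_y=1$, the continuous function $M\in C([0,1],\mathbb R)$ is uniformly approximable to within $\eta$ by a UOE network $\widetilde M$ of width one. Since each output codeword is $\eta$-separated from the discontinuities of $D$, we get $D(\widetilde M(E(x)))=D(M(E(x)))=\widehat{f^*}(x)$ for all $x$, hence $\|D\circ\widetilde M\circ E-f^*\|_\infty<\varepsilon$; note the resulting network is discontinuous (it uses FLOOR), which is permitted for $C$-UAP. Finally I would splice $E$ (FLOOR, width $d_x$), $\widetilde M$ (UOE, width one) and $D$ (FLOOR, width $d_y$) into one feed-forward network of width $w^*_{\min}=\max(d_x,d_y)$: give every hidden layer $\max(d_x,d_y)$ neurons and, in the layers belonging to $\widetilde M$ and to the narrower of $E$ and $D$, leave the surplus neurons inert (zero incoming and outgoing weights, so their constant output is discarded). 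Every activation used is FLOOR or UOE, so this is a UOE+FLOOR network; combined with Lemma~\ref{th:universal_w_min} this gives $w_{\min}=\max(d_x,d_y)$.

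\textbf{Main obstacle.} Since the encoder and decoder are taken unchanged from the proof of Lemma~\ref{th:C-UAP_ReLU+FLOOR}, the only genuinely new step is justifying that the width-two ReLU memorizer may be replaced by a width-one UOE one. The delicate point is that $\widetilde M$ merely approximates $M$ while the decoder $D$ is discontinuous, so one must arrange the codeword margin $\eta$ first and then invoke the \emph{uniform} 1D approximation guaranteed by Theorem~\ref{th:C-UAP_UOE}; this is what makes the composition decode exactly. A secondary, routine nuisance is the bookkeeping of embedding sub-networks of widths $d_x$, $1$, and $d_y$ into a single fixed-width FNN, which the inert-padding trick handles.
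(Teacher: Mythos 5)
Your proof follows essentially the same route as the paper: the lower bound from Lemma~\ref{th:universal_w_min}, and for the upper bound the encoder--memorizer--decoder scheme of Lemma~\ref{th:C-UAP_ReLU+FLOOR} with the width-two ReLU memorizer swapped for a width-one UOE memorizer via Theorem~\ref{th:C-UAP_UOE}, which is exactly the argument sketched in Section~\ref{sec:minimal_width} (the paper's appendix proof additionally dispatches the $d_x=d_y=1$ case directly through Lemma~\ref{th:1D_UAP}, which your width-one EMD construction also covers). Your explicit treatment of the codeword margin $\eta$ needed because the UOE memorizer only approximates $M$ while the FLOOR decoder is discontinuous is a correct and worthwhile detail that the paper leaves implicit.
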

\begin{proof}
    The case where $\max(d_x, d_y) \ge 2$ is a consequence of Lemma \ref{th:C-UAP_ReLU+FLOOR} since the UOE function contains the leaky-ReLU as a part. The case where $\max(d_x, d_y)=1$, \emph{i.e.} $d_x=d_y=1$, is a consequence of Lemma \ref{th:1D_UAP}.
\end{proof}

\end{document}